\PassOptionsToPackage{unicode}{hyperref}
\PassOptionsToPackage{hyphens}{url}
\PassOptionsToPackage{dvipsnames,svgnames,x11names}{xcolor}
\documentclass[
  12pt]{article}

\usepackage{amsmath,amssymb,amsthm}
\usepackage{iftex}
\ifPDFTeX
  \usepackage[T1]{fontenc}
  \usepackage[utf8]{inputenc}
  \usepackage{textcomp} 
\else 
  \usepackage{unicode-math}
  \defaultfontfeatures{Scale=MatchLowercase}
  \defaultfontfeatures[\rmfamily]{Ligatures=TeX,Scale=1}
\fi
\usepackage{lmodern}
\ifPDFTeX\else  
\fi
\IfFileExists{upquote.sty}{\usepackage{upquote}}{}
\IfFileExists{microtype.sty}{
  \usepackage[]{microtype}
  \UseMicrotypeSet[protrusion]{basicmath} 
}{}
\makeatletter
\@ifundefined{KOMAClassName}{
  \IfFileExists{parskip.sty}{%
    \usepackage{parskip}
  }{
    \setlength{\parindent}{0pt}
    \setlength{\parskip}{6pt plus 2pt minus 1pt}}
}{
  \KOMAoptions{parskip=half}}
\makeatother
\usepackage{xcolor}
\makeatletter
\ifx\paragraph\undefined\else
  \let\oldparagraph\paragraph
  \renewcommand{\paragraph}{
    \@ifstar
      \xxxParagraphStar
      \xxxParagraphNoStar
  }
  \newcommand{\xxxParagraphStar}[1]{\oldparagraph*{#1}\mbox{}}
  \newcommand{\xxxParagraphNoStar}[1]{\oldparagraph{#1}\mbox{}}
\fi
\ifx\subparagraph\undefined\else
  \let\oldsubparagraph\subparagraph
  \renewcommand{\subparagraph}{
    \@ifstar
      \xxxSubParagraphStar
      \xxxSubParagraphNoStar
  }
  \newcommand{\xxxSubParagraphStar}[1]{\oldsubparagraph*{#1}\mbox{}}
  \newcommand{\xxxSubParagraphNoStar}[1]{\oldsubparagraph{#1}\mbox{}}
\fi
\makeatother

\newtheorem{theorem}{Theorem}

\newtheorem{assumption}{Assumption}
\newtheorem{definition}{Definition}
\newtheorem{example}{Example}

\usepackage{algorithm}
\usepackage{algpseudocode}

\usepackage{longtable,booktabs,array}
\usepackage{calc} 
\usepackage{etoolbox}
\makeatletter
\patchcmd\longtable{\par}{\if@noskipsec\mbox{}\fi\par}{}{}
\makeatother
\IfFileExists{footnotehyper.sty}{\usepackage{footnotehyper}}{\usepackage{footnote}}
\makesavenoteenv{longtable}
\usepackage{graphicx}
\makeatletter
\def\maxwidth{\ifdim\Gin@nat@width>\linewidth\linewidth\else\Gin@nat@width\fi}
\def\maxheight{\ifdim\Gin@nat@height>\textheight\textheight\else\Gin@nat@height\fi}
\makeatother
\setkeys{Gin}{width=\maxwidth,height=\maxheight,keepaspectratio}
\makeatletter
\def\fps@figure{htbp}
\makeatother

\makeatletter
\@ifpackageloaded{caption}{}{\usepackage{caption}}
\AtBeginDocument{%
\ifdefined\contentsname
  \renewcommand*\contentsname{Table of contents}
\else
  \newcommand\contentsname{Table of contents}
\fi
\ifdefined\listfigurename
  \renewcommand*\listfigurename{List of Figures}
\else
  \newcommand\listfigurename{List of Figures}
\fi
\ifdefined\listtablename
  \renewcommand*\listtablename{List of Tables}
\else
  \newcommand\listtablename{List of Tables}
\fi
\ifdefined\figurename
  \renewcommand*\figurename{Figure}
\else
  \newcommand\figurename{Figure}
\fi
\ifdefined\tablename
  \renewcommand*\tablename{Table}
\else
  \newcommand\tablename{Table}
\fi
}
\@ifpackageloaded{float}{}{\usepackage{float}}
\floatstyle{ruled}
\@ifundefined{c@chapter}{\newfloat{codelisting}{h}{lop}}{\newfloat{codelisting}{h}{lop}[chapter]}
\floatname{codelisting}{Listing}

\makeatother
\makeatletter
\@ifpackageloaded{caption}{}{\usepackage{caption}}
\@ifpackageloaded{subcaption}{}{\usepackage{subcaption}}
\makeatother

\ifLuaTeX
  \usepackage{selnolig}  
\fi
\usepackage[]{natbib}
\usepackage{bookmark}

\IfFileExists{xurl.sty}{\usepackage{xurl}}{} 
\hypersetup{
  pdftitle={Title},
  pdfauthor={Author 1; Author 2},
  pdfkeywords={3 to 6 keywords, that do not appear in the title},
  colorlinks=true,
  linkcolor={blue},
  filecolor={Maroon},
  citecolor={Blue},
  urlcolor={Blue},
  pdfcreator={LaTeX via pandoc}}

\newcommand{\anon}{1}

\begin{document}

\def\spacingset#1{\renewcommand{\baselinestretch}%
{#1}\small\normalsize} \spacingset{1}


\if1\anon
{
  \title{\bf Scalable Posterior Uncertainty for Flexible Density-Based Clustering}
  \author{Nicola Bariletto\thanks{
    The authors are grateful to Vansh Bansal, Daniel W. Barreto, Yunshan Duan, Bernardo Flores, Nevena Gligi\'c and Alessandro Rinaldo for helpful discussions, and especially to Shuai Guo and Peter M\"uller for their input on the single-cell RNA sequencing application. Nicola Bariletto gratefully acknowledges funding from the ``G. Mortara'' scholarship by the Bank of Italy.}\hspace{.2cm}\\
    Department of Statistics and Data Sciences, UT Austin\\
    and \\
    Stephen G. Walker \\
    Department of Statistics and Data Sciences, UT Austin}
  \maketitle
} \fi

\if0\anon
{
  \bigskip
  \bigskip
  \bigskip
  \begin{center}
    {\LARGE\bf Scalable Posterior Uncertainty for Flexible Density-Based Clustering}
\end{center}
  \medskip
} \fi

\bigskip
\begin{abstract}
We introduce a novel framework for uncertainty quantification in clustering that combines martingale posterior distributions with density-based clustering. Unlike classical model-based approaches, which define clusters at the latent level of a mixture model, we treat clusters as explicit functionals of the data-generating density, without assuming any specific parametric form. To characterize density uncertainty, we obtain martingale posterior samples via a predictive resampling scheme driven by model score evaluations. This allows us to leverage state-of-the-art differentiable density estimators, such as normalizing flows, making density resampling efficient in large-scale settings and fully parallelizable on modern GPU hardware. Martingale posterior samples of the clustering structure are then obtained by applying density-based clustering to the density draws, enabling principled inference on any clustering-related quantity. Casting the inference target as a density functional further enables a rigorous theoretical analysis of the procedure's convergence properties. We apply our methodology to image and single-cell RNA sequencing data, demonstrating the computational efficiency afforded by its GPU compatibility as well as its ability to recover meaningful clustering structures, with associated uncertainty, across diverse domains.
\end{abstract}

\noindent%
{\it Keywords:} Bayesian nonparametrics, Contraction rates, Martingale posterior distributions, Normalizing flows, Predictive resampling.
\vfill

\newpage
\spacingset{1.8} 

\section{Introduction}\label{sec-intro}

In this article we develop a framework for uncertainty quantification with clustering that combines martingale posterior distributions and density-based clustering, thereby achieving flexibility in the clustering structure, computational scalability in large-scale settings, and theoretical tractability. The key ingredients consist in defining the clustering structure as a functional of the data-generating density, rather than at the latent level of a mixture model
as it happens with standard Bayesian approaches, and in propagating uncertainty through martingale posterior samples of the density obtained via score-based predictive resampling. This latter step departs from traditional Bayesian workflows based on Markov chain Monte Carlo (MCMC) and enables fast, parallelizable, and GPU-compatible sampling using state-of-the-art neural density estimators such as normalizing flows. These features make the method scalable to large and complex datasets, flexible with respect to cluster shapes, and amenable to a rigorous theoretical analysis of its convergence properties. In two real-world applications, we document the ability of our methodology to deliver meaningful uncertainty quantification for clustering with large and high-dimensional datasets, including images and single-cell RNA sequencing data, in about 4 minutes on a single GPU.

Clustering is a fundamental problem in statistical learning that aims to partition data, and potentially the entire sample space, into subsets that are internally similar yet mutually distinct. From a statistical perspective, when clustering is performed based on limited amounts of data, quantifying uncertainty is a key desideratum.  Bayesian methods are particularly well suited to this task, owing to their natural ability to represent uncertainty over the complex space of data partitions. The classical Bayesian approach to clustering, known as \emph{model-based clustering}, relies on specifying a mixture model for the data and interpreting clusters as groups of observations assigned to the same latent mixture component \citep{mclachlan2000finite, fraley2002model, wade2023bayesian}. A key feature of this paradigm is that clustering is defined entirely at the latent level, which aligns well with MCMC methods that exploit model augmentation via cluster label assignments to sample from the posterior distribution of the mixture parameters \citep{neal2000}. Importantly, this also admits nonparametric formulations \citep{lo1984,lijoi2005hierarchical,deblasi2013gibbs,barrios2013modeling}, in which the mixing measure has infinitely many components and MCMC-based inference can flexibly target the number of occupied clusters \citep[][among many others]{lo1984, escobar1995bayesian, mueller1996}.

However, recent work has highlighted some key limitations of Bayesian model-based clustering. For instance, performance is highly sensitive to the choice of density kernel, making misspecification a prominent concern. Moreover, large or high-dimensional datasets pose challenges both statistically, as the posterior tends to favor degenerate clustering configurations \citep{chandra2023escaping}, and computationally, as MCMC tends to suffer from slow mixing. Finally, especially in the nonparametric context, mixtures tend to favor too many components and may even be inconsistent for the number of clusters \citep{miller2013simple, miller2014inconsistency}, unless specific prior choices are adopted \citep{ascolani2023clustering}.

Several solutions tackling these issues have been proposed. Repulsive mixtures \citep{xu2016bayesian, xie2020bayesian, petralia2012repulsive, beraha2025bayesian, cremaschi2025repulsion, song2025repulsive} have proved effective at reducing the number of estimated components, while other approaches have targeted kernel misspecification directly \citep{buch2024bayesian, dombowsky2025bayesian}, investigated variable selection strategies \citep{tadesse2005bayesian,chandra2023escaping}, and addressed scalability to large datasets \citep{ni2020scalable}. These proposals, however, mostly operate within the traditional Bayesian framework based on mixtures, where high-dimensional and large datasets still represent a computational bottleneck, and where theoretical guarantees are hard to establish and heavily tied to model specifics. Moreover, these methods leave open the question of how to summarize the partition-valued posterior in terms of point estimates and beyond, motivating a growing literature on tailored post-processing strategies for such tasks \citep{wade2018clustering, dahl2022search, bariletto2025conformalized}.

In this article, we tackle these limitations on the one hand by departing from the model-based clustering paradigm in favor of a density-based approach, and on the other by relaxing traditional Bayesian inference to work within the more general framework of martingale posterior distributions, as follows.

\subsection{Martingale posterior distributions}

As mentioned, our goal is to move away from model-based clustering, which relies on latent data partitions induced by hierarchical mixture models, and instead to adopt a notion of clustering based entirely on evaluations of an underlying estimated density, say $\hat f$. We defer a more detailed introduction to this class of density-based clustering procedures to the next subsection, and focus here on density estimation.

Because our aim is to quantify uncertainty in clustering, and the procedures of interest define clusters through the estimated density $\hat f$, the first step is to characterize uncertainty in $\hat f$ itself. A key bottleneck of Bayesian MCMC-based pipelines is the high computational cost required to learn a meaningful representation of the data-generating density $f_*$, especially in settings with moderate to high dimensions and large sample sizes. Given our goal of obtaining flexible and reliable uncertainty quantification, these computational limitations motivate a departure from the traditional Bayesian paradigm in favor of a more scalable alternative. To this end, we consider the framework of martingale posterior distributions \citep[MPDs,][]{fong2023martingale,fortini2023prediction,fortini2025exchangeability},\footnote{See also \cite{rodriguez2025martingale} for a recent contribution on model-based clustering with martingale posteriors.} a generalization of Bayesian posteriors that replaces sequential MCMC with fully parallelizable predictive updates.

At a high level, the idea behind MPDs is that uncertainty about identifiable model parameters, including infinite-dimensional objects such as densities, would vanish if an infinite data sequence $(X_i)_{i=1}^\infty$ were observed. In practice, only a finite sample $X_1,\ldots,X_n$ is available, and uncertainty can be quantified by imputing future observations $X_{n+1},X_{n+2},\ldots$ through recursive one-step-ahead predictions. Specifically, one samples $Y_{1}$ conditional on $X_1,\ldots,X_n$ to impute $X_{n+1}$, then $Y_{2}$ conditional on $X_1,\ldots,X_n,Y_{1}$ to impute $X_{n+2}$, and continues recursively. The key ingredient is the specification of the predictive rules used to generate this synthetic tail. Once the imputed sequence is obtained, any identifiable parameter can be expressed as a function of $(X_1,\ldots,X_n,Y_{1},Y_{2},\ldots)$, and its distribution, induced by the predictive randomness in the imputed data, defines the MPD. In practice, this resampling procedure is repeated $T\in\mathbb N$ times independently and in parallel, yielding $T$ imputed datasets $(X_1,\ldots,X_n,Y_{1}^{(1)},Y_{2}^{(1)},\ldots), \ldots, (X_1,\ldots,X_n,Y_{1}^{(T)},Y_{2}^{(T)},\ldots)$ and corresponding parameter samples from the MPD.

An important class of MPDs, which is useful when the parameter of interest is the entire density function, is obtained by first training a baseline estimator $f_\theta$ on the observed data, and then recursively updating it using the score function $\nabla_\theta \log f_\theta(x)$ evaluated at the imputed observations $Y_{1},Y_{2},\ldots$. At the end of the recursion, an MPD sample of the density $f_\theta$ is obtained, and the whole procedure can be repeated $T$ times in parallel to collect $T$ independent draws. While we defer a detailed discussion of score-based predictive resampling, its key feature is that the parameter updates rely entirely on log-likelihood gradient evaluations \citep{holmes2023statistical,cui2025martingale,fortini2025exchangeability}. This aligns naturally with modern neural density estimators, which admit automatic differentiation of the log-likelihood, are trained via stochastic gradient methods, and perform well in high-dimensional settings. For instance, in our applications we focus on normalizing flows \citep{rezende2015variational,papamakarios2017masked,papamakarios2019normalizing}, which allow for efficient gradient-based updates and are well suited for fast and parallel GPU implementations.

\begin{figure}[t] \centering{ \includegraphics[width=0.95\textwidth]{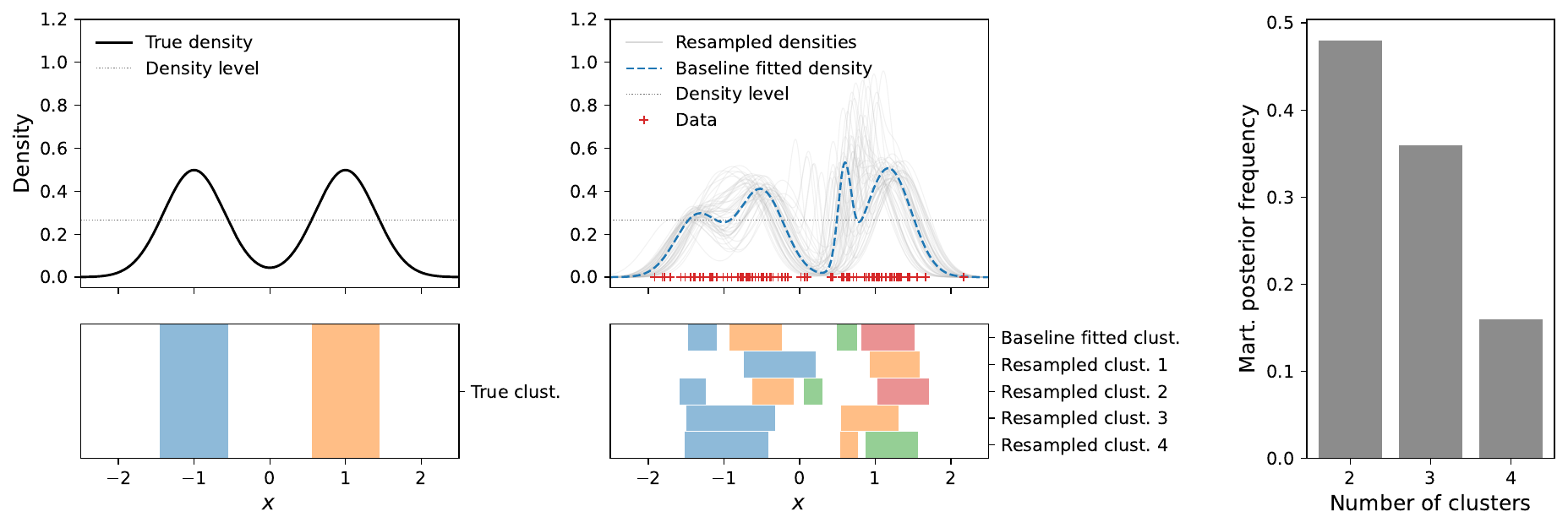} } \caption{Illustration of the proposed methodology. Given the true density, the clusters are fully determined (left panel). From observed data (center panel), the density is estimated by maximum likelihood using a differentiable model (dashed blue line), and MPD uncertainty is obtained via predictive resampling, yielding samples of the density (gray lines) and of the corresponding clustering (colored regions). This enables statistical inference on quantities such as the number of clusters (right panel). We note that colors distinguish clusters within each partition, but no correspondence across partitions is implied.}
\label{fig:martingale_clustering_1d} \end{figure}

As a simple illustration of score-based MPD resampling of the density, consider the left and center panels of Figure~\ref{fig:martingale_clustering_1d}. The top part of the left panel shows the one-dimensional data-generating density
\begin{equation*}
    f_*(x) = \frac{1}{2} N(x\mid -1,(0.4)^2) + \frac{1}{2} N(x\mid 1,(0.4)^2), \quad x\in\mathbb R,
\end{equation*}
where $N(x\mid \mu,\sigma^2)$ denotes the Gaussian density. After generating $100$ independent samples from $f_*$ (shown as red crosses in the center panel), we fit a Gaussian mixture model
\begin{equation*}
    f_{\boldsymbol{w},\boldsymbol{\mu},\boldsymbol{\sigma}^2}(x) = \sum_{j=1}^4 w_j N(x\mid \mu_j,\sigma_j^2),
\end{equation*}
obtaining a baseline estimator $f_{\boldsymbol{\hat w},\boldsymbol{\hat\mu},\boldsymbol{\hat\sigma}^2}$ via the expectation-maximization (EM) algorithm \citep{dempster1977maximum}, shown as a dashed blue line. We then apply the score-based resampling procedure\footnote{In this simple case, the score function is available in closed form, as reported in the Appendix.} $T=50$ times in parallel, each truncated after $N=2{,}000$ steps, yielding the MPD samples $f^{(1)},\ldots,f^{(T)}$ shown as gray lines. These capture uncertainty in the estimated density. Note that we intentionally omit the dependence of each $f^{(j)}$ on the underlying parameters, since we now intend to define clusters directly in terms of the density function $x \mapsto f^{(j)}(x)$, rather than through the corresponding mixture representation.

\subsection{Density-based clustering}\label{sub:DBC_intro}

While the MPD framework introduced above targets uncertainty in the density, our primary focus is on clustering. The goal is therefore to translate uncertainty in the density, represented by parallel and independent MPD samples $f^{(1)},\ldots,f^{(T)}$, into uncertainty in the clustering. To this end, we rely on a well-established paradigm known as \emph{density-based clustering} (DBC), which has its roots in the computer science and machine learning literatures \citep{chacon2015population, kriegel2011density}.

Rather than operating at the latent level of mixture models, DBC defines clusters as explicit functionals of a target density $f$, regardless of whether $f$ arises from a mixture specification. The basic idea is to slice $f$ at one or more levels $c > 0$ and to identify clusters with the connected components of the upper-level sets $\{x \in \mathcal{X} : f(x) \geq c\}$. For instance, returning to the left panel of Figure~\ref{fig:martingale_clustering_1d}, consider the ``oracle'' setting in which the true data-generating density $f_*$ is available. In that case, the prototypical DBC algorithm \citep{ester1996density}
\begin{enumerate}
    \item slices $f_*$ at a density threshold (the dashed line),
    \item defines clusters as the connected components of the resulting upper-level set (the colored regions in the lower part of the plot),
    \item  and either labels the remaining regions (the white gaps) as noise or assigns them to the nearest cluster.
\end{enumerate}

This definition is intuitive: connectedness of upper-level set components captures within-cluster homogeneity, while separation by low-density regions encodes across-cluster heterogeneity.

In practice, $f_*$ is not available and must be estimated. This is where the MPD samples $f^{(1)},\ldots,f^{(T)}$ play a central role, as DBC can be applied to each $f^{(j)}$ to obtain a MPD sample of the estimated clustering. To illustrate, in the center panel of Figure~\ref{fig:martingale_clustering_1d}, we apply the DBC procedure to four of the sampled densities (shown in gray), with the resulting partitions displayed in color at the bottom. As a byproduct, this enables uncertainty quantification for any clustering-related quantity. For instance, the right panel reports the MPD of the number of clusters. Incidentally, we note that, even in this simple example, uncertainty quantification is informative: while the EM point estimate yields four clusters, the MPD places most of its mass on two clusters, in agreement with the structure induced by $f_*$.

The DBC perspective has several advantages over model-based clustering. First, it does not rely on a mixture specification and is therefore applicable to general density estimators. It is also robust to clusters with irregular shapes, since clusters are defined adaptively through density levels rather than fixed parametric kernels. Moreover, once the density is fixed, the clustering structure is uniquely determined as a functional of it. In contrast, model-based approaches require latent assignments, leading to non-identifiability and well-known label-switching issues. From this point of view, DBC provides both computational and theoretical benefits, including a clear definition of the population-level clustering induced by $f_*$. In addition, DBC naturally partitions the entire sample space,\footnote{In high dimensions, however, describing clusters as subsets of the whole sample space (as done in Figure~\ref{fig:martingale_clustering_1d} via density evaluations on a fine grid) is challenging, and one typically reports cluster assignments for the observed data or for other selected points.} making it straightforward to assign unseen data to clusters. Finally, beyond the basic procedure described above, a large literature has developed more refined methods that handle clusters of varying sizes and complex geometries, while still relying only on density evaluations and a notion of topology on the sample space \citep{campello2013density, ankerst1999optics, cheng1995mean, chazal2013persistence, rodriguez2014clustering}, which makes them applicable to our setting.

Before proceeding, we note that, within the classical Bayesian framework, \cite{buch2024bayesian} proposed applying a variant of the basic DBC procedure to MCMC samples from mixture models. While that approach performs well in low-dimensional settings and moderate sample sizes, its scalability is limited by the cost of sequential MCMC and by the difficulty of obtaining reliable density estimates in higher dimensions. In addition, both theoretical analysis and point estimation from partition-valued samples are challenging. The approach developed in this article, based on MPDs, modern neural density estimators, and flexible DBC procedures, provides a practical alternative that addresses these limitations.


The remainder of the article is organized as follows. Section~\ref{sec:methodology} presents the proposed methodology in detail. Section~\ref{sec:consistency} establishes theoretical convergence guarantees for our procedure. Section~\ref{sec:applications} illustrates the method through two applications to real-world datasets, involving digit images and single-cell RNA sequencing data. Section~\ref{sec:discussion} concludes. The proofs of the theoretical results, along with additional experimental details, are deferred to the Appendix.

\section{Methodology}\label{sec:methodology}

As laid out in the previous section, our methodology can be understood through the lens of a two-step procedure. Given a sample of data, a differentiable density model, and a DBC algorithm, the steps are:
\begin{enumerate}
    \item \textbf{Density uncertainty}: train a baseline density estimator from the selected family, e.g., by maximum likelihood; then obtain martingale posterior samples for the density in parallel, using score-based updates of the baseline estimator;\label{step1}

    \item \textbf{Clustering uncertainty}: apply the chosen DBC procedure to each sampled density from Step~\ref{step1}, yielding martingale posterior samples of the clustering.\label{step2}
\end{enumerate}

Step~\ref{step1} does not strictly require score-based MPDs; other valid updating rules from the literature (e.g., based on copulas) may be used. In fact, our asymptotic analysis of clustering consistency in Section~\ref{sec:consistency} is agnostic to the type of MPD, and DBC in Step~\ref{step2} may be applied to any collection of sampled densities. We nonetheless focus on score-based MPDs, and will provide specific theoretical guarantees for them, because this class admits instantiations based on flexible deep learning architectures that are essential for good performance in large-scale settings.

Step~\ref{step2} is equally flexible: a variety of off-the-shelf DBC algorithms can be applied to the sampled densities. In this section and in the theoretical analysis that follows, we focus on the prototypical DBC pipeline already described, which essentially corresponds to DBSCAN \citep{ester1996density}. Theory in this setting is both tractable and informative about extensions to more complex algorithms. In the real-data applications of Section~\ref{sec:applications}, however, we use ToMATo \citep{chazal2013persistence}, a more robust instantiation of DBC better suited to the complex data we analyze. This modularity is a key advantage of our method: Step~\ref{step1} accommodates different MPD instantiations (including simple parametric ones, should the data warrant it), while Step~\ref{step2} accommodates different DBC procedures depending on the complexity of the problem.

\subsection{Density uncertainty}

Step~\ref{step1} consists in obtaining MPD samples of the data-generating density. Concretely, assume we observe $X_{1:n}:=(X_{1},\ldots,X_n)$ taking values in $\mathcal X\subseteq \mathbb R^p$ and generated i.i.d.\ from an unknown density $f_*$ with respect to the Lebesgue measure $\lambda$ on $\mathcal X$. The MPD framework treats uncertainty as arising from the unobserved tail of the data sequence $X_{n+1:\infty}$ \citep{fong2023martingale}: if the entire sequence were available, any identifiable parameter would be pinned down with certainty. Because the tail is missing, uncertainty is accounted for by imputing it recursively: one first draws $Y_{1}\sim\pi_{n,0}(\cdot\mid X_{1:n})$ to impute $X_{n+1}$, then $Y_{2}\sim\pi_{n,1}(\cdot\mid X_{1:n},Y_1)$ to impute $X_{n+2}$, and so on. The key ingredient of this procedure, referred to as \emph{predictive resampling}, is a sequence $(\pi_{n,k})_{k\geq 0}$ of predictive distributions driving the recursive imputation. Once the entire sequence has been resampled, any identifiable parameter is a deterministic function of it. The randomness induced by resampling translates into randomness over the parameter, which is said to be drawn from the \emph{martingale posterior distribution}, denoted as $\Pi_n$.

The conceptual appeal of this framework is that it generalizes classical Bayesian learning. As \cite{fong2023martingale} showed, if $(\pi_{n,k})_{k\geq 0}$ coincides with the sequence of predictive distributions arising from a standard prior-likelihood-posterior pipeline, the MPD reduces to the traditional Bayesian posterior. Bayesian procedures thus treat uncertainty as fundamentally arising from prediction of unseen data \citep{definetti1937prevision,fong2023martingale,fortini2023prediction,fortini2025exchangeability}, and tackling prediction directly yields a philosophically coherent notion of uncertainty that, as a welcome byproduct, replaces costly sequential MCMC with resampling from the predictive rule, which can be fully parallelized across samples.

The predictive resampling strategy we focus on is described as follows. Suppose that, based on the observed data $X_{1:n}$, we have trained a differentiable density estimator $\{f_\theta(x):\theta\in\Theta\}$ with parameter space $\Theta\subseteq\mathbb R^d$. Let $s(x;\theta):=\nabla_\theta \log f_\theta(x)$ denote the associated \emph{score}, and let $\theta_{n,0}$ be the trained parameter. Fix a learning rate schedule $\eta_{n,k}:=\eta_0/(n+k-1)$ for some $\eta_0>0$, and consider the scheme
\begin{align*}
    Y_k \mid Y_{1:k-1} &\sim f_{\theta_{n,k-1}}, \\
    \theta_{n,k} &= \theta_{n,k-1} + \eta_{n,k}s(Y_k;\theta_{n,k-1}),
\end{align*}
for all $k\geq 1$, with $Y_1\sim f_{\theta_{n,0}}$ when $k=1$ \citep{holmes2023statistical,cui2025martingale, fortini2025exchangeability}.

Intuitively, $(\theta_{n,k})_{k\in\mathbb N}$ forms a martingale because of the score identity $\mathbb E_{Y\sim f_\theta}[s(Y;\theta)]=0$, which holds under mild regularity conditions on the density estimator. The limiting distribution of this martingale is the MPD of interest, which is approximated in practice by stopping the resampling after a large but finite number of steps $N$.

For the MPD to be well defined, the resampling procedure must converge, i.e., the random limit $\theta_{n,\infty}:=\lim_{N\to\infty}\theta_{n,N}$ must exist. To formalize this, we introduce an algorithmic probability space $(\Omega,\mathcal F,\mathbb P)$ on which the resampling is defined and governed by $\mathbb P$. Moreover, we let $(\mathcal F_k)_{k\in\mathbb N}$ denote the filtration generated by $(Y_k)_{k\in\mathbb N}$, with respect to which $(\theta_{n,k})_{k\in\mathbb N}$ is adapted.

\begin{theorem}\label{thm:mart_post_growth}
Assume that
\begin{enumerate}
    \item[(i)] $\mathbb{E}_{Y \sim f_\theta}[s(Y;\theta)] = 0$ for all $\theta \in \Theta$;
    \item[(ii)] There exist constants $A, B > 0$ such that $\mathbb{E}_{Y \sim f_\theta}[\|s(Y;\theta)\|^2] \le A + B\|\theta\|^2$ for all $\theta \in \Theta$.
\end{enumerate}
Then, for any initialization $\theta_{n,0}\in\Theta$, the sequence $(\theta_{n,k})_{k \ge 1}$ is an $L^2$-bounded martingale under $\mathbb{P}$, and $\theta_{n,\infty} := \lim_{k\to\infty}\theta_{n,k}$ exists $\mathbb{P}$-almost surely and in $L^2$.\footnote{We refer to \cite{cui2025martingale} for a detailed discussion of the $O(1/n)$ variance at the parameter level implied by this procedure with our chosen step size schedule. We refrain from discussing this further, as our focus is on density rather than parameter estimation.}
\end{theorem}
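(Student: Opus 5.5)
The plan is to verify the martingale property directly from (i), then control the second moments by a discrete Gronwall-type recursion driven by (ii) and the summability of $\eta_{n,k}^2$. Once $L^2$-boundedness is in hand, the martingale convergence theorem gives the conclusion. Throughout, $\theta_{n,k}$ is $\mathcal F_k$-measurable, and conditionally on $\mathcal F_{k-1}$ we have $Y_k\sim f_{\theta_{n,k-1}}$.

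\emph{Martingale property.} Conditioning on $\mathcal F_{k-1}$ and using (i) at the (now fixed) parameter $\theta=\theta_{n,k-1}$ gives
\begin{equation*}
\mathbb E[\theta_{n,k}\mid\mathcal F_{k-1}]=\theta_{n,k-1}+\eta_{n,k}\,\mathbb E_{Y\sim f_{\theta_{n,k-1}}}[s(Y;\theta_{n,k-1})]=\theta_{n,k-1}.
\end{equation*}
This step requires integrability of $\theta_{n,k}$, which follows from the square-integrability established next, by induction on $k$.

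\emph{Second-moment recursion.} Expand $\|\theta_{n,k}\|^2$. The cross term $2\eta_{n,k}\langle\theta_{n,k-1},s(Y_k;\theta_{n,k-1})\rangle$ has zero conditional mean by (i). Applying (ii) conditionally then yields
\begin{equation*}
\mathbb E\|\theta_{n,k}\|^2\le(1+B\eta_{n,k}^2)\,\mathbb E\|\theta_{n,k-1}\|^2+A\eta_{n,k}^2 .
\end{equation*}
Since $\theta_{n,0}$ is deterministic, induction shows each $\mathbb E\|\theta_{n,k}\|^2$ is finite, which also justifies the conditional-expectation manipulations above. Set $a_k=\mathbb E\|\theta_{n,k}\|^2$ and iterate:
\begin{equation*}
a_k\le\Big(a_0+A\sum_{j\ge1}\eta_{n,j}^2\Big)\prod_{j\ge1}(1+B\eta_{n,j}^2)\le\Big(\|\theta_{n,0}\|^2+A S\Big)e^{BS},
\end{equation*}
where $S=\sum_{j\ge1}\eta_0^2/(n+j-1)^2<\infty$. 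Hence $\sup_k\mathbb E\|\theta_{n,k}\|^2<\infty$, i.e., the martingale is $L^2$-bounded.

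\emph{Convergence.} An $L^2$-bounded martingale converges almost surely and in $L^2$ by Doob's $L^2$ martingale convergence theorem, applied coordinatewise in $\mathbb R^d$. This gives the existence of $\theta_{n,\infty}$ in both senses.

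\emph{Main obstacle.} The delicate point is measure-theoretic rather than analytic. Assumptions (i) and (ii) are stated for deterministic $\theta$, while they are applied at the random parameter $\theta_{n,k-1}$. This needs a regular conditional distribution, or a disintegration argument, giving $\mathbb E[g(Y_k,\theta_{n,k-1})\mid\mathcal F_{k-1}]=\int g(y,\theta_{n,k-1})f_{\theta_{n,k-1}}(y)\,dy$. It also needs joint measurability of $(y,\theta)\mapsto s(y;\theta)$ and $f_\theta(y)$, which I would impose implicitly as part of the setup. A further implicit requirement is that the iterates remain in $\Theta$, for example by taking $\Theta=\mathbb R^d$.
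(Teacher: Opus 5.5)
Your proposal is correct and follows the paper's proof essentially step for step: the martingale property from (i), the vanishing cross term and the recursion $v_k\le(1+B\eta_{n,k}^2)v_{k-1}+A\eta_{n,k}^2$ from (ii), the uniform bound $(\|\theta_{n,0}\|^2+AS)e^{BS}$ via $1+x\le e^x$, and Doob's $L^2$ convergence theorem. Your extra remarks are points the paper leaves implicit rather than gaps in your argument: integrability by induction, applying (i)--(ii) at the random parameter through the conditional sampling kernel together with joint measurability, and keeping the iterates in $\Theta$.
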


Theorem~\ref{thm:mart_post_growth} provides conditions under which predictive resampling converges, enabling the formal definition of the MPD $\Pi_n$ as the push-forward measure of $\theta_{n,\infty}$ through the resampling distribution $\mathbb P$: $\Pi_n:=\mathbb P\circ\theta_{n,\infty}^{-1}.$ This definition immediately induces a corresponding MPD at the density level via the mapping $\theta\mapsto f_\theta$, which captures the density uncertainty of interest.

Score-based MPDs are appealing because they depend solely on the ability to sample from $f_\theta$ and to evaluate $\nabla_\theta\log f_\theta(x)$, which aligns well with modern deep learning solutions based on automatic differentiation. A particularly relevant class of neural density estimators consists of \emph{normalizing flows} \citep{rezende2015variational,papamakarios2017masked,papamakarios2019normalizing}, which we exploit in our applications. These models construct flexible densities through the change-of-variable formula applied to a simple baseline distribution, typically a standard normal. Formally, let $Z \in \mathbb{R}^p$ denote a base random variable with standard multivariate normal density $p_Z(z) = (2\pi)^{-p/2}\exp(-\|z\|^2/2)$. Normalizing flows define a family of diffeomorphisms $T_\theta : \mathbb{R}^p \to \mathbb{R}^p$ parameterized by $\theta \in \Theta \subseteq \mathbb{R}^d$ (typically neural network-based), with inverse mappings $G_\theta(y) := T_\theta^{-1}(y)$. The associated density estimator follows from the change of variables formula:
\begin{equation}\label{eq:nf_density}
    f_\theta(y) = p_Z(G_\theta(y)) \left| \det \partial_y G_\theta(y) \right|,
\end{equation}
where $\partial$ denotes the Jacobian operator. Beyond the ease of computing scores and their efficient training via gradient-based maximum likelihood, normalizing flows offer the additional advantage that Equation~\ref{eq:nf_density} enables density evaluation at any parameter value $\theta$ and data point $x$, a capability essential for our downstream interest in DBC.

\begin{theorem}\label{thm:nf_compliance}
Assume that $f_\theta$ defined in \eqref{eq:nf_density} satisfies the following conditions:
\begin{enumerate}
    \item[(C1)] For all $y \in \mathbb{R}^p$, the maps $\theta \mapsto G_\theta(y)$ and $\theta \mapsto \det \partial_y G_\theta(y)$ are continuously differentiable on $\Theta$.
    \item[(C2)] There exists a Lebesgue-integrable function $g: \mathbb{R}^p \to \mathbb{R}$ such that $\|\nabla_\theta f_\theta(y)\| \le g(y)$ for all $\theta \in \Theta$ and $y \in \mathbb{R}^p$.
    \item[(C3)] There exist functions $h_1, h_2: \mathbb{R}^p \to \mathbb{R}_+$ (e.g., polynomials) satisfying $\mathbb{E}_{Z \sim p_Z}[\|Z\|^2 h_1(Z)^2 + h_2(Z)^2] < \infty$ and such that, for all $z \in \mathbb{R}^p$ and $\theta \in \Theta$, the following hold
    \begin{align*}
        \Big\| \partial_\theta G_\theta(y) \big|_{y = T_\theta(z)} \Big\|_F &\le h_1(z)(1 + \|\theta\|), \\
        \Big\| \nabla_\theta \log \left| \det \partial_y G_\theta(y) \right| \big|_{y = T_\theta(z)} \Big\| &\le h_2(z)(1 + \|\theta\|),
    \end{align*}
    where $\|A\|_F := \sqrt{\sum_{i,j} A_{ij}^2}$ denotes the Frobenius norm of a matrix $A$.
\end{enumerate}
Then, the density estimator $f_\theta$ satisfies assumptions (i) and (ii) of Theorem~\ref{thm:mart_post_growth}, and the score-based MPD is well-defined.
\end{theorem}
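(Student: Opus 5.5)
We verify (i) and (ii) directly and then invoke Theorem~\ref{thm:mart_post_growth}.

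\emph{Assumption (i).} Condition (C1) makes $\theta\mapsto f_\theta(y)$ continuously differentiable for every $y$. This follows from the chain rule applied to $p_Z(G_\theta(y))\,|\det\partial_y G_\theta(y)|$. On the open set where $f_\theta(y)>0$ we then have $s(y;\theta)f_\theta(y)=\nabla_\theta f_\theta(y)$, and this extends to the whole space since the score only matters $f_\theta$-a.e. Condition (C2) supplies an integrable dominating function $g$ uniformly in $\theta$. Dominated convergence (in its differentiation-under-the-integral form, via the mean value theorem on difference quotients) therefore gives
\begin{equation*}
\mathbb E_{Y\sim f_\theta}[s(Y;\theta)]=\int \nabla_\theta f_\theta(y)\,dy=\nabla_\theta\int f_\theta(y)\,dy=\nabla_\theta 1=0.
\end{equation*}
Here $\int f_\theta=1$ because $T_\theta$ is a diffeomorphism.

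\emph{Assumption (ii).} Compute the score explicitly. We have $\log f_\theta(y)=-\tfrac p2\log(2\pi)-\tfrac12\|G_\theta(y)\|^2+\log|\det\partial_yG_\theta(y)|$, so
\begin{equation*}
s(y;\theta)=-\big(\partial_\theta G_\theta(y)\big)^{\top}G_\theta(y)+\nabla_\theta\log|\det\partial_yG_\theta(y)|.
\end{equation*}
Next, change variables: if $Y\sim f_\theta$ then $Y=T_\theta(Z)$ with $Z\sim p_Z$, and $G_\theta(Y)=Z$. Using $\|M^\top v\|\le\|M\|_F\|v\|$ and (C3), we get
\begin{equation*}
\|s(T_\theta(Z);\theta)\|\le \big(\|Z\|h_1(Z)+h_2(Z)\big)(1+\|\theta\|).
\end{equation*}
Squaring and applying $(a+b)^2\le 2a^2+2b^2$ twice yields
\begin{equation*}
\mathbb E\|s(Y;\theta)\|^2\le 4\,\mathbb E\big[\|Z\|^2h_1(Z)^2+h_2(Z)^2\big](1+\|\theta\|^2).
\end{equation*}
This is (ii), with $A=B=4\,\mathbb E[\cdots]<\infty$.

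With (i) and (ii) established, Theorem~\ref{thm:mart_post_growth} applies directly, so $\theta_{n,\infty}$ exists and the MPD is well-defined.

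The main obstacle is the differentiation-under-the-integral step in (i). There we must make sure that (C2)'s bound $\|\nabla_\theta f_\theta\|\le g$ holds along the segment joining $\theta$ and $\theta+h$. This requires $\Theta$ to be open and convex locally, or at least to contain a neighbourhood of each point; I would assume $\Theta$ is open and argue coordinatewise with the mean value theorem. A secondary point is the identity $s\,f_\theta=\nabla_\theta f_\theta$ where $f_\theta=0$. This set is null here, since $p_Z>0$ and $\det\partial_yG_\theta\neq0$ for a diffeomorphism, so $f_\theta>0$ everywhere.
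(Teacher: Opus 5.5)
Your proof is correct and follows the paper's argument essentially step for step. For (i) you differentiate under the integral using (C1)--(C2) together with $\nabla_\theta f_\theta = f_\theta\, s$, and for (ii) you use the explicit score, the substitution $Y=T_\theta(Z)$, the Frobenius bound and (C3), arriving at the same constants $A=B=4\,\mathbb{E}_{Z\sim p_Z}[\|Z\|^2h_1(Z)^2+h_2(Z)^2]$. Your remarks on openness of $\Theta$ and strict positivity of $f_\theta$ only make explicit details the paper leaves implicit in its one-line appeal to dominated convergence.
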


Theorem~\ref{thm:nf_compliance} provides general conditions under which normalizing flow architectures generate a valid score-based MPD. As a final remark on the question of resampling convergence, we note that while such analytical guarantees are appealing, the generality of the score-based MPD framework for density uncertainty extends beyond models where conditions can be verified analytically. In the presence of complex models (e.g., for specific neural network architectures parametrizing $T_\theta$) one may, for practical purposes, simply verify that resampling stabilizes for large enough $N$.

\begin{example}\label{ex:circle_density}
    Consider the two-dimensional dataset ($n=5{,}000$) plotted in the left panel of Figure~\ref{fig:circles_combined_density}, which was sampled from two noisy concentric circles. While low-dimensional, this is a classic illustrative example where model-based clustering with mixtures fails due to irregular cluster shapes, and where uncertainty quantification in clustering proves valuable given the high noise level. After training a Masked Autoregressive Flow \citep[MAF;][]{papamakarios2017masked} density estimator (plotted in the second panel of Figure~\ref{fig:circles_combined_density}), we generated 1,000 independent predictive score-based MPD resamples of the density (with $N=3{,}000$ updates); the last two plots depict the first two such resamples. As expected, the flexible MAF architecture captures the data density well, and the resampled densities appear as plausible perturbations of the trained estimate.
    
    \begin{figure}[t]
    \centering
    \includegraphics[]{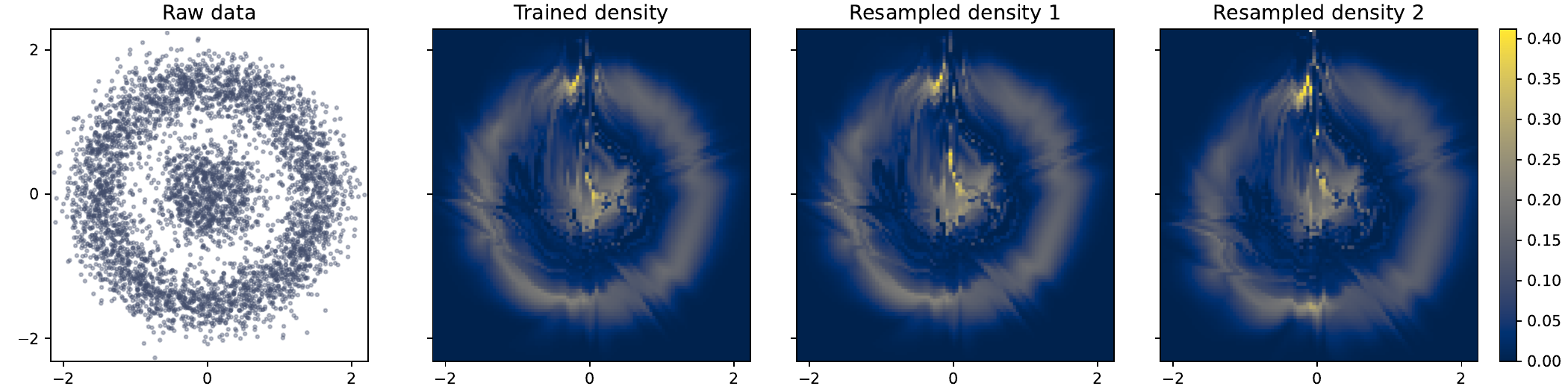}
    \caption{Density MPD uncertainty for the noisy concentric circles dataset. }
    \label{fig:circles_combined_density}
\end{figure}
\end{example}

\subsection{Clustering uncertainty}
Once MPD uncertainty is obtained in the form of density samples $f^{(1)},\ldots,f^{(T)}$, it can be directly propagated to the density-based clustering structure by applying the chosen DBC procedure to each $f^{(j)}$, which is again a fully parallelizable task.

As mentioned, several DBC methods exist with varying complexity and flexibility. We focus on perhaps the most fundamental one, which roughly corresponds to the seminal DBSCAN algorithm \citep{ester1996density} and forms the basis for our theoretical guarantees in the next section. Given a density $f: \mathcal{X} \to [0,\infty)$ and $t \in (0,\infty)$, we let $\mathcal{L}_t(f) := \{x \in \mathcal{X} : f(x) \geq t\}$ denote the \emph{upper-level set} of $f$ at level $t$. We define $\mathcal{C}_t(f)$ as the set of path-connected components (maximal path-connected subsets) of $\mathcal{L}_t(f)$.\footnote{A subset $A$ of $\mathcal X$ is path-connected if for any two points $x,y\in A$ there exists a continuous path $\gamma:[0,1]\to\mathcal X$ connecting them (i.e., $\gamma(0)=x$ and $\gamma(1)=y$) that lies entirely in $A$ (i.e., $\gamma(w)\in A$ for all $w\in[0,1]$).} The elements of $\mathcal{C}_t(f)$ are called the clusters of $f$ at level $t$, and $k_f(t) := |\mathcal{C}_t(f)|$ denotes the number of such clusters. We refer to Figure~\ref{fig:martingale_clustering_1d} for a visual illustration, and refer to this procedure as \emph{level set clustering}.

Level set clustering acts purely at the level of an input density $f$ and outputs a partitioning of the sample space according to the path-connected components of $\mathcal{L}_t(f)$. As we show in the next section, this approach facilitates the establishment of theoretical properties, since the inference target is fully determined by the underlying density, for which extensive statistical theory exists. However, from a computational perspective, identifying the path-connected components exactly is infeasible in more than one or two dimensions, leading to the need for approximation methods. In particular, path-connectedness is often approximated by (i) choosing representative sample points in the sample space (typically the training data, which is often what one seeks to cluster), and (ii) building a neighborhood graph with vertices at those points and edges between close points, such as points within a $\delta$-neighborhood or via a $k$-nearest-neighborhood graph \citep{wang2019dbscan}. Additionally, small spurious clusters arising from the approximate nature of the algorithm must be addressed.

In Algorithm 1 in the Appendix, we provide complete details of our preferred implementation, which can be summarized as follows: given a level $t=c$, for each MPD sample $f^{(j)}$ perform the following steps:
\begin{enumerate}
    \item Compute $f^{(j)}(x)$ at all $x$ in the training dataset.
    \item Let $\mathcal{D}^{(j)}$ be the collection of points $x$ such that $f^{(j)}(x)\geq c$.
    \item Build a $\delta$-neighborhood graph $G^{(j)}$ (in $\mathcal{X}$ space) with vertex set $\mathcal{D}^{(j)}$.
    \item Cluster together the data points that are connected by a path in $G^{(j)}$.\footnote{For our upcoming demonstrations with level set clustering, we also apply a final (but optional) step: each data point outside $\mathcal D^{(j)}$ is clustered together with its closest neighbor in $\mathcal D^{(j)}$. Based on the task at hand and the intended interpretation, one may also skip this step and labe; those points as ``noise.''}
\end{enumerate}

A key feature of this procedure is that it relies only on density evaluations and a graph encoding a discrete version of the target topology on $\mathcal{X}$, which extends applicability beyond the case where $\mathcal{X}=\mathbb{R}^p$. This dependence on only these two ingredients is shared by many other DBC procedures, including ToMATo \citep{chazal2013persistence,gudhi:urm}, the algorithm we employ in the real-data applications discussed in Section~\ref{sec:applications} below.

\begin{example}\label{ex:circle_clust}
Recall the noisy concentric circles dataset and the MPD uncertainty over densities obtained in Example~\ref{ex:circle_density}. Leveraging that as a starting point, we performed level set clustering\footnote{The level $t$, common across all trained and resampled densities, was chosen to have 10\% of the data with trained density value below $t$.} for each resampled density, yielding a point estimate from the baseline trained density and MPD samples of the clustering structure (Figure~\ref{fig:circles_combined_clust}, three left-most panels). To visualize clustering uncertainty, we computed the co-clustering matrix $\mathcal{M}$, where $\mathcal{M}(i,j)$ records the proportion of resamples in which data points $i$ and $j$ are assigned to the same cluster. The right panel of Figure~\ref{fig:circles_combined_clust} visualizes a simple derived measure of point-wise co-clustering certainty:
\(
\mathcal{S}(i):=n^{-1}\sum_{j=1}^{n} (\mathcal{M}(i,j)-0.5)^2,
\)
where low values (rows $\mathcal{M}(i,\cdot)$ averaging close to 0.5) indicate high uncertainty across MPD samples. Points near the boundary between the two circles exhibit the highest uncertainty, showing that our method effectively captures ambiguity in the clustering structure of the noisy dataset.
    
    \begin{figure}[t]
    \centering
    \includegraphics[]{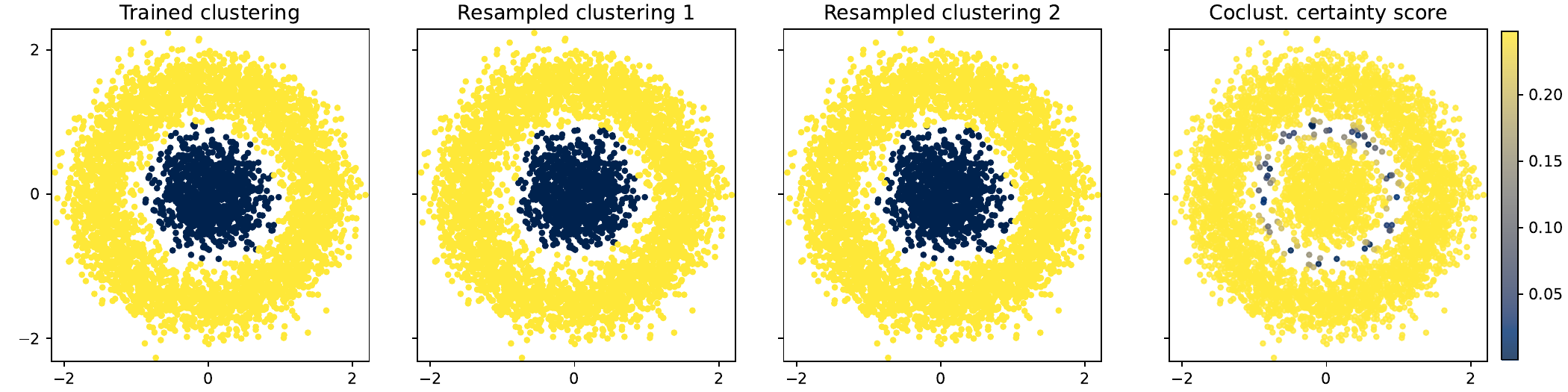}
    \caption{Clustering MPD uncertainty for the noisy concentric circles dataset. Coloring is by cluster membership in the three left-most plots, and by certainty score in the right plot.}
    \label{fig:circles_combined_clust}
    \end{figure}
\end{example}

\section{Theoretical guarantees}\label{sec:consistency}

We now study the asymptotic properties of our methodology when DBC is performed through level set clustering as described in the previous section. We explicitly adopt the frequentist assumption that the data are i.i.d.\ from an unknown distribution $F_*$ with density $f_*$.\footnote{We denote by $F_*^n$ the $n$-fold product measure induced by $F_*$.} At a high level, our proposed methodology comprises three steps: computing a frequentist density estimator $f_{\theta_{n,0}}$, obtaining a score-based martingale posterior at the level of the density, and translating that into clustering uncertainty via DBC. Our strategy for obtaining theoretical convergence guarantees reflects this process: we start from convergence of the estimator $f_{\theta_{n,0}}$, then establish contraction of the MPD for the density, and finally ensure consistency of the induced clustering.

To that end, we begin with the following definition, which relies on the choice of a metric $d$ between probability densities (e.g., Hellinger or $\mathcal{L}^p, p\in[1,\infty]$, metrics).

\begin{definition}\label{def:frequentist_consistency}
    For a positive vanishing sequence $(\varepsilon_n)_{n\in\mathbb{N}}$, we say that $f_{\theta_{n,0}}$ is $(d,\varepsilon_n)$-consistent at $f_*$ if
    \[
    \lim_{n\to\infty} F_*^n\!\left(d\big(f_{\theta_{n,0}},f_*\big)>\varepsilon_n\right)=0.
    \]
\end{definition}

Definition~\ref{def:frequentist_consistency} formalizes the classical notion of convergence rate (in probability) of a frequentist density estimator relative to a metric $d$. Extensive literature exists on the foundational theory of convergence rates for nonparametric estimators and general M-estimators across diverse model families \citep[see, e.g.,][]{vanDerVaartWellner1996weak, vandeGeer2000empirical, tsybakov2009introduction}. A key feature of our framework, as our upcoming results show, is that once a convergence rate for the baseline estimator $f_{\theta_{n,0}}$ is available, contraction of the MPD $\Pi_n$ can be entirely decoupled from the specific model class.

\begin{theorem}\label{thm:density_rates_local}
Let $\mathcal{V}_\delta(f_*) := \{\theta \in \Theta : d(f_\theta, f_*) \le \delta\}$. Assume that $\mathbb{E}_{Y \sim f_\theta}[s(Y;\theta)] = 0$ for all $\theta \in \Theta$ and that there exist constants $L>0$, $\gamma\in(0,1]$, $\delta>0$, and $M_\delta<\infty$ such that:
\begin{enumerate}
    \item[(I)] $d(f_\theta,f_{\theta'}) \le L\|\theta-\theta'\|^\gamma$ for all $\theta,\theta'\in\Theta$.
    \item[(II)] $\sup_{\theta \in \mathcal{V}_\delta(f_*)} \mathbb{E}_{Y \sim f_\theta}[\|s(Y;\theta)\|^2] \le M_\delta$.
\end{enumerate}
If $f_{\theta_{n,0}}$ is $(d,\varepsilon_n)$-consistent at $f_*$, then
\begin{equation}\label{eq:posterior_rate_local}
\lim_{n\to\infty}\Pi_n\big(\theta: d(f_\theta,f_*)>\tilde\varepsilon_n\big)=0 \quad \text{in } F_*^n\text{-prob.}
\end{equation}
for any decreasing sequence $\tilde\varepsilon_n\gtrsim\varepsilon_n$ satisfying $\lim_{n\to\infty} n\tilde\varepsilon_n^{2/\gamma}=\infty$.\footnote{The symbol $\gtrsim$ denotes inequality up to a multiplicative constant. Also, see \cite{fong2024asymptotics,fong2025bayesian} for related results.}
\end{theorem}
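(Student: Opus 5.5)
The plan is to bound the posterior mass outside the $\tilde\varepsilon_n$-ball by the triangle inequality together with a martingale second-moment bound. Work on the event $E_n:=\{d(f_{\theta_{n,0}},f_*)\le\varepsilon_n\}$, which has $F_*^n$-probability tending to one by $(d,\varepsilon_n)$-consistency. It then suffices to show that on $E_n$ the deterministic quantity $\mathbb P\big(d(f_{\theta_{n,\infty}},f_*)>\tilde\varepsilon_n\big)$ is bounded by some $r_n\to0$ that does not depend on the data.

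By the triangle inequality and (I),
\[
d(f_{\theta_{n,\infty}},f_*)\le \varepsilon_n + L\|\theta_{n,\infty}-\theta_{n,0}\|^\gamma .
\]
Since $\tilde\varepsilon_n\gtrsim\varepsilon_n$, we have $\varepsilon_n\le C\tilde\varepsilon_n$ for some constant $C$. Rescaling the target to $\tilde\varepsilon_n' = (C+1)\tilde\varepsilon_n$ is harmless, since constants can be absorbed by replacing $\tilde\varepsilon_n$ with $\tilde\varepsilon_n/(C+1)$, which still satisfies the rate condition. The problem thus reduces to showing
\[
\mathbb P\big(\|\theta_{n,\infty}-\theta_{n,0}\|>(\tilde\varepsilon_n/L)^{1/\gamma}\big)\to0 .
\]
Chebyshev's inequality bounds this probability by $(L/\tilde\varepsilon_n)^{2/\gamma}\,\mathbb E\|\theta_{n,\infty}-\theta_{n,0}\|^2$. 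By orthogonality of martingale increments, which uses the score identity, the second moment should be
\[
\sum_{k\ge1}\eta_{n,k}^2\,\mathbb E\|s(Y_k;\theta_{n,k-1})\|^2 .
\]
If each term is at most $\eta_0^2M_\delta/(n+k-1)^2$, the sum is $O(1/n)$, and the bound becomes $O\big(1/(n\tilde\varepsilon_n^{2/\gamma})\big)\to0$.

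The main obstacle is that (II) controls the score only on $\mathcal V_\delta(f_*)$, while the martingale may leave that neighbourhood. I would handle this with a stopping time $\tau:=\inf\{k:\theta_{n,k}\notin\mathcal V_{\delta}(f_*)\}$ and work with the stopped martingale $\theta_{n,k\wedge\tau}$.

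On $E_n$ and for large $n$, $\theta_{n,0}\in\mathcal V_{\delta/2}(f_*)$. The stopped increments involve only parameters in $\mathcal V_\delta$, namely $\theta_{n,k-1}$ with $k\le\tau$, so the stopped martingale satisfies
\[
\mathbb E\|\theta_{n,k\wedge\tau}-\theta_{n,0}\|^2\le \eta_0^2M_\delta\sum_{k\ge1}(n+k-1)^{-2}\le \eta_0^2M_\delta\,c/n .
\]
Leaving $\mathcal V_\delta$ requires, by (I), $L\|\theta_{n,k}-\theta_{n,0}\|^\gamma\ge\delta/2$. Doob's maximal inequality then gives $\mathbb P(\tau<\infty)\le (2L/\delta)^{2/\gamma}c'/n\to0$.

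On $\{\tau=\infty\}$ the stopped and unstopped processes coincide, including their limit. Note that Theorem~\ref{thm:mart_post_growth} is not needed for existence of the limit on this event, since the stopped martingale is $L^2$-bounded. Combining the pieces gives the bound
\[
\mathbb P\big(d(f_{\theta_{n,\infty}},f_*)>\tilde\varepsilon_n'\big)\le \mathbb P(\tau<\infty)+(L/\tilde\varepsilon_n)^{2/\gamma}c''/n ,
\]
which tends to zero uniformly on $E_n$. Since $F_*^n(E_n^c)\to0$, this yields convergence in $F_*^n$-probability. One subtlety is whether $\theta_{n,\infty}$ is defined off $\{\tau=\infty\}$; the set where it is not defined can be assigned to the exceptional event, whose probability vanishes.

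One more point needs checking: the orthogonality of increments. It uses $\mathbb E[s(Y_k;\theta_{n,k-1})\mid\mathcal F_{k-1}]=0$, which follows from the assumed score identity because $Y_k\sim f_{\theta_{n,k-1}}$ conditionally.
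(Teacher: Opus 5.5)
Your proposal is correct and is essentially the paper's proof. Both localize with a stopping time so that (II) applies, and use orthogonality of the stopped increments to get $\mathbb{E}\|\theta_{n,k\wedge\tau}-\theta_{n,0}\|^2\le M_\delta\eta_0^2/(n-1)$. Both then bound $\mathbb{P}(\tau<\infty)$ by Doob's maximal inequality, bound the deviation of the limit by Chebyshev, and conclude with the triangle inequality and the baseline's consistency. Your exit-from-$\mathcal{V}_\delta(f_*)$ stopping time and additive $\varepsilon_n$ bias term are cosmetic variants of the paper's displacement threshold $(\delta/2L)^{1/\gamma}$ and its $\tilde\varepsilon_n/2$ split.

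The one soft spot is the constant-absorption remark, which is circular: rescaling $\tilde\varepsilon_n$ also rescales the constant in $\tilde\varepsilon_n\gtrsim\varepsilon_n$. What your argument actually proves is the claim for every $\tilde\varepsilon_n\ge M\varepsilon_n$ with $M>1$, and that is the right reading of the statement. The paper's proof tacitly needs $M\ge 2$. For $M<1$ the conclusion can fail: take a baseline with $d(f_{\theta_{n,0}},f_*)\approx m\varepsilon_n$ for some $m\in(M,1)$ and $\varepsilon_n\gg n^{-\gamma/2}$, so the martingale posterior sits at distance about $m\varepsilon_n>M\varepsilon_n$ from $f_*$.
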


Theorem~\ref{thm:density_rates_local} links the rate $\varepsilon_n$ to contraction of the MPD. Specifically, the contraction rate of $\Pi_n$ in the metric $d$ is mediated by the smoothness of the parametrization $\theta\mapsto f_\theta$, with greater smoothness yielding a faster rate. We also note that, while we do not report these extensions, inspection of our proof in the Appendix shows that strengthening the result to almost sure contraction (given almost sure convergence of the baseline estimator) and weakening it to simple consistency (replacing $\tilde\varepsilon_n$ with any constant $\varepsilon>0$) are both straightforward.

We now turn to consistency of the MPD applied to level set clustering. Preliminarily, we fix the level threshold $t=c$ and omit $c$ from the notation. Without loss of generality, we also treat $\Pi_n$ as a distribution over densities rather than parameters, making it possible to apply our result to any MPD (not just our score-based instantiation) for which density contraction has been studied. From now on, we also implicitly assume that the true density $f_*$ and all densities in the model class belong to a regular class of functions whose level sets have a finite number of path-connected components, and denote by
\(
\mathcal{C}(f_*)=\{C_1(f_*),\ldots,C_{k_{f_*}}(f_*)\}
\)
the true clusters. Throughout, let $A\triangle B := (A\setminus B)\cup(B\setminus A)$ denote the symmetric difference between sets $A$ and $B$, $\|f-f'\|_{\mathcal{L}^\infty}:=\sup_{x\in\mathcal{X}}|f(x)-f'(x)|$ the sup-norm distance between any two densities $f,f'$, and $S_k$ the set of permutations of $\{1,\ldots,k\}$. Moreover, we state the following conditions on the true density for future reference.

\begin{assumption}\label{ass1}
For any two disjoint sets $A, B \subset \mathcal{X}$, define the saddle height between them as
\begin{equation*}
    S(A, B) := \sup_{\gamma \in \Gamma(A, B)} \inf_{t \in [0, 1]} f_*(\gamma(t)),
\end{equation*}
where $\Gamma(A, B)$ is the set of all continuous paths $\gamma: [0, 1] \to \mathcal{X}$ with $\gamma(0) \in A$ and $\gamma(1) \in B$. We then consider the following three conditions:
\begin{enumerate}
        \item[(a)] There exist $C_0,\varepsilon_0>0$ such that, for all $\varepsilon\in(0,\varepsilon_0)$, $\lambda(\{x\in\mathcal{X} : |f_*(x)-c|\leq \varepsilon\})\leq C_0\varepsilon$.
        \item[(b)] There exists $\delta_0\in(0,c)$ such that $\max_{i \neq j} S(C_i(f_*), C_j(f_*))\le c - \delta_0$.
        \item[(c)] There exists $\eta_0 > 0$ such that for all $\eta\in(0, \eta_0)$:
        \begin{itemize}
            \item For each $j \in \{1, \ldots, k_{f_*}\}$, the set $K_j^\eta := \{x \in C_j(f_*) : f_*(x) \geq c + \eta\}$ is non-empty and path-connected.
            \item $\mathcal{L}_{c - \eta}(f_*)$ has $k_{f_*}$ path-connected components.
        \end{itemize}
\end{enumerate}
\end{assumption}

In words, condition (a) requires that $f_*$ does not flatten at level $c$, condition (b) requires clusters to be separated by sufficiently deep density valleys, and condition (c) requires minimal stability properties of the clusters under perturbations of the level $c$. With these conditions established, a first estimable object of interest is the level set $\mathcal{L}(f)$, which we treat as follows.

\begin{theorem}\label{thm:level_set_consistency}
    Assume $f_*$ satisfies condition (a) in Assumption~\ref{ass1}. If $\tilde{\varepsilon}_n$ is a contraction rate as in \eqref{eq:posterior_rate_local} for $d(f,f')=\|f-f'\|_{\mathcal{L}^\infty}$, then
    \begin{equation*}
        \lim_{n\to\infty}\Pi_n\big(f : \lambda\big(\mathcal{L}(f)\triangle \mathcal{L}(f_*)\big)>2C_0\tilde{\varepsilon}_n\big) = 0\quad \text{in } F^n_*\text{-probability}.
    \end{equation*}
\end{theorem}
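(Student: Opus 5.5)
The argument is deterministic at its core. For any fixed density $f$ with $\|f-f_*\|_{\mathcal{L}^\infty}\le\varepsilon$, where $\varepsilon<\varepsilon_0$, we will show that
\[
\mathcal{L}(f)\triangle\mathcal{L}(f_*)\subseteq\{x\in\mathcal{X}: |f_*(x)-c|\le\varepsilon\}.
\]
Take $x\in\mathcal{L}(f)\setminus\mathcal{L}(f_*)$. Then $f(x)\ge c$ and $f_*(x)<c$, so $c>f_*(x)\ge f(x)-\varepsilon\ge c-\varepsilon$. Symmetrically, if $x\in\mathcal{L}(f_*)\setminus\mathcal{L}(f)$, then $c\le f_*(x)\le f(x)+\varepsilon<c+\varepsilon$. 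In both cases $|f_*(x)-c|\le\varepsilon$. Condition (a) of Assumption~\ref{ass1} then gives
\[
\lambda\big(\mathcal{L}(f)\triangle\mathcal{L}(f_*)\big)\le C_0\varepsilon\le 2C_0\varepsilon .
\]
The statement allows the factor $2$, which provides slack, for instance if (a) were read with a strict or one-sided band. Measurability of the level sets is immediate if the densities are measurable.

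Next, fix $n$ large enough that $\tilde\varepsilon_n<\varepsilon_0$; this is possible because $\tilde\varepsilon_n$ is decreasing and vanishing. Applying the inclusion with $\varepsilon=\tilde\varepsilon_n$ gives the event inclusion
\[
\big\{f:\lambda(\mathcal{L}(f)\triangle\mathcal{L}(f_*))>2C_0\tilde\varepsilon_n\big\}\subseteq\big\{f:\|f-f_*\|_{\mathcal{L}^\infty}>\tilde\varepsilon_n\big\}.
\]
By monotonicity of $\Pi_n$, the $\Pi_n$-mass of the left-hand set is bounded by that of the right-hand set.

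By hypothesis, the right-hand mass tends to zero in $F_*^n$-probability, since $\tilde\varepsilon_n$ is a contraction rate in the sup-norm as in \eqref{eq:posterior_rate_local}. The left-hand mass is a nonnegative random variable dominated by a quantity converging to $0$ in probability, so it also converges to $0$ in $F_*^n$-probability. This concludes the argument.

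The only delicate point is the requirement $\tilde\varepsilon_n<\varepsilon_0$, which is needed to invoke (a). It is handled by the fact that $\tilde\varepsilon_n$ vanishes: the finitely many initial indices with $\tilde\varepsilon_n\ge\varepsilon_0$ do not affect the limit. There is no genuine obstacle beyond this bookkeeping, and the measurability of the events is assumed.
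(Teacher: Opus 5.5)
Your proof is correct and follows essentially the same route as the paper. The paper invokes the decomposition $\lambda(\mathcal{L}(f)\triangle\mathcal{L}(f_*))\le\lambda(\{|f_*-c|\le\varepsilon\})+\lambda(\{|f-f_*|>\varepsilon\})$ from Cuevas et al.\ together with a union bound that splits $\delta$ into two halves, which is where the factor $2$ comes from. Your direct inclusion under sup-norm control removes the second term outright, so you get the sharper bound $C_0\tilde\varepsilon_n$, and both arguments rely on the same implicit requirement that $\tilde\varepsilon_n$ eventually falls below $\varepsilon_0$.
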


Theorem~\ref{thm:level_set_consistency} reveals that MPD contraction in the sup-norm metric implies the same rate for estimating the level set, measured by the Lebesgue measure of the symmetric difference. As this result and the following show, controlling density-based clustering requires strong control over the underlying density; hence the requirement for sup-norm contraction.

Nevertheless, convergence of the upper-level set alone is insufficient to ensure level set clustering consistency, as discrepancies among components may persist. We address this next, but first, for densities $f$ and $f'$ with $k_f=k_{f'}$, we measure the discrepancy between the induced clusterings as follows:
\[
\mathcal{D}(\mathcal{C}(f),\mathcal{C}(f'))
:=
\inf_{\sigma\in S_{k_f}}
\sum_{j=1}^{k_f}
\lambda\big(C_j(f)\triangle C_{\sigma(j)}(f')\big).
\]

\begin{theorem}\label{thm:clustering_consistency}
Assume $f_*$ satisfies conditions (a), (b) and (c) in Assumption~\ref{ass1}. If $\tilde{\varepsilon}_n$ is a contraction rate as in \eqref{eq:posterior_rate_local} for $d(f,f')=\|f-f'\|_{\mathcal{L}^\infty}$, then
\begin{equation}\label{eq:consistency_number_clusters}
\lim_{n\to\infty}\Pi_n\big(f:k_f\neq k_{f_*}\big)=0
\quad \text{in } F_*^n\text{-prob.}
\end{equation}
and
\begin{align}\label{eq:consistency_clusters}
\lim_{n\to\infty}
\Pi_n\bigg(
f: k_f=k_{f_*},\,
\mathcal{D}(\mathcal{C}(f),\mathcal{C}(f_*))
> k_{f_*} C_0 \tilde{\varepsilon}_n
\bigg)=0
\quad \text{in } F_*^n\text{-prob.}
\end{align}
\end{theorem}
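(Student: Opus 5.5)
The plan is to reduce both statements to a deterministic claim about any single density $f$ with $\|f-f_*\|_{\mathcal L^\infty}\le\tilde\varepsilon_n$. The contraction rate then transfers that claim to $\Pi_n$. Concretely, for $n$ large enough that $\tilde\varepsilon_n<\min(\varepsilon_0,\delta_0,\eta_0)$, we show that every $f$ in the sup-norm ball $B_n:=\{f:\|f-f_*\|_{\mathcal L^\infty}\le\tilde\varepsilon_n\}$ satisfies $k_f=k_{f_*}$ and $\mathcal D(\mathcal C(f),\mathcal C(f_*))\le k_{f_*}C_0\tilde\varepsilon_n$. Then both events in \eqref{eq:consistency_number_clusters} and \eqref{eq:consistency_clusters} are contained in $B_n^c$. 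Since $\Pi_n(B_n^c)\to0$ in $F_*^n$-probability by assumption, both limits follow.

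\emph{Sandwiching.} Write $\varepsilon=\tilde\varepsilon_n$ and fix $f\in B_n$. The elementary inclusions $\mathcal L_{c+\varepsilon}(f_*)\subseteq\mathcal L(f)\subseteq\mathcal L_{c-\varepsilon}(f_*)$ hold. By condition (c), $\mathcal L_{c-\varepsilon}(f_*)$ has exactly $k_{f_*}$ path-components $U_1,\dots,U_{k_{f_*}}$, each containing exactly one $C_j(f_*)$. To see this, note that each $C_j$ lies in some component, and two true clusters cannot share a component: a path joining them inside $\mathcal L_{c-\varepsilon}(f_*)$ would give saddle height $\ge c-\varepsilon>c-\delta_0$, contradicting (b). Likewise, by (c), each $K_j:=K_j^\varepsilon\subseteq C_j(f_*)\cap\mathcal L(f)$ is nonempty and path-connected. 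Hence each $K_j$ lies in a single component $D_j$ of $\mathcal L(f)$, and $D_j\subseteq U_j$ because $\mathcal L(f)\subseteq\bigcup_i U_i$ and $D_j$ is path-connected. Since the $U_j$ are disjoint, $j\mapsto D_j$ is injective, so $k_f\ge k_{f_*}$.

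\emph{Surjectivity, the main obstacle.} It remains to show that every component $D$ of $\mathcal L(f)$ meets some $K_j$. Otherwise $D\subseteq\{x:|f_*(x)-c|\le\varepsilon\}$ and $D\subseteq U_j$ for some $j$ ($D$ could also sit in $U_j\setminus C_j$ as an extra component). Under the sup-norm bound alone this is the delicate step. Condition (a) only forces $\lambda(D)\le C_0\varepsilon$; it does not prevent small spurious components created by oscillations of $f$ near level $c$. I would first try to exploit (c) more fully. For $\eta<\eta_0$, $U_j^\eta$ (the component of $\mathcal L_{c-\eta}(f_*)$) varies monotonically and collapses onto $C_j$ as $\eta\downarrow0$. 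Combining this with path-connectedness of $K_j^\eta$ for all small $\eta$, I would argue that any point $x\in D$ can be joined inside $\mathcal L(f)$ to $K_j$. Failing that, the natural fallback is to interpret $k_f$ up to components of measure $O(\varepsilon)$, which is exactly what Algorithm 1's spurious-cluster removal does, or to add the mild regularity that the model class has no local maxima in the band $|f-c|\le\varepsilon$. Any of these gives $k_f=k_{f_*}$ with $D_j$ the unique component inside $U_j$.

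\emph{Measure bound.} With the bijection $C_j(f_*)\leftrightarrow D_j$ and the inclusions $K_j\subseteq D_j\cap C_j(f_*)$ and $D_j\cup C_j(f_*)\subseteq U_j$, we get
\[
D_j\triangle C_j(f_*)\subseteq U_j\setminus K_j\subseteq\{x\in U_j:|f_*(x)-c|\le\varepsilon\}.
\]
These sets are disjoint across $j$. Summing and using (a), in the form $\lambda(\{|f_*-c|\le\varepsilon\})\le C_0\varepsilon$, gives $\mathcal D\le C_0\varepsilon\le k_{f_*}C_0\varepsilon$, as required.
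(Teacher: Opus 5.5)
The gap is where you flag it: your proposal never establishes $k_f\le k_{f_*}$. Your first suggested route cannot work. That route joins every point of a component $D$ of $\mathcal{L}(f)$ to some $K_j^{\tilde\varepsilon_n}$ inside $\mathcal{L}(f)$ by exploiting (c). But (a)--(c) constrain only $f_*$, while the sup-norm ball leaves $f$ free to oscillate in the band $\{x:|f_*(x)-c|\le\tilde\varepsilon_n\}$.

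Concretely, take the two-component Gaussian mixture of the introduction, with $c$ strictly between the valley height and the peak heights, so that (a)--(c) hold with $k_{f_*}=2$. Pick $x_0$ to the right of $\mathcal{L}(f_*)$ with $f_*(x_0)=c-\tilde\varepsilon_n/4$. Let $f_n$ be $f_*$ plus a bump of height $\tilde\varepsilon_n/2$ centred at $x_0$, narrow enough to miss $\mathcal{L}(f_*)$, then renormalized. Then $\|f_n-f_*\|_{\mathcal{L}^\infty}\le\tilde\varepsilon_n$, yet $\mathcal{L}(f_n)$ has a third component near $x_0$. So $\Pi_n=\delta_{f_n}$ (or any $\Pi_n$ concentrated on such densities) meets the contraction hypothesis while $\Pi_n(f:k_f\neq k_{f_*})=1$. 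Thus \eqref{eq:consistency_number_clusters} is false as stated, and no argument from these hypotheses can close your gap.

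The paper's proof does not get around this either. It shows that each component of $\mathcal{L}(f)$ lies in one of the $k_{f_*}$ components of $\mathcal{L}_{c-\tilde\varepsilon_n}(f_*)$, and then concludes $k_f\le k_{f_*}$. That assignment need not be injective: in the example, two components of $\mathcal{L}(f_n)$ lie in the same one. So the conclusion does not follow. You located the real obstruction, which the paper's argument passes over.

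Outside this step your route matches the paper's: saddle separation, path-connected cores, and symmetric differences confined to the band. It proves more than you credit it with.
\begin{itemize}
\item Your injection $j\mapsto D_j$ already gives $\Pi_n(f:k_f<k_{f_*})\le\Pi_n(B_n^c)\to0$.
\item \eqref{eq:consistency_clusters} needs no surjectivity at all. On $B_n\cap\{k_f=k_{f_*}\}$, the $k_{f_*}$ distinct components $D_j$ exhaust $\mathcal{C}(f)$, so your injection is automatically a bijection and your measure bound applies. You should present the second claim as fully proved, not as conditional on the missing step.
\item Your use of the disjoint sets $U_j$ gives $\mathcal{D}(\mathcal{C}(f),\mathcal{C}(f_*))\le C_0\tilde\varepsilon_n$. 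This is sharper than the paper's $k_{f_*}C_0\tilde\varepsilon_n$, which bounds each symmetric difference by the whole band before summing.
\end{itemize}

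Recovering a version of \eqref{eq:consistency_number_clusters} requires changing the statement, as your fallbacks suggest. One option is to count only components of $\mathcal{L}(f)$ of measure exceeding $C_0\tilde\varepsilon_n$. Spurious components lie in the band, whereas $\lambda(D_j)\ge\lambda(K_j^{\tilde\varepsilon_n})$ is bounded away from zero for large $n$ when $f_*$ is continuous. Another option is to assume the support of $\Pi_n$ excludes densities with a local maximum at points where $|f_*-c|\le\tilde\varepsilon_n$. In that variant, phrase the band through $f_*$ rather than as $|f-c|\le\tilde\varepsilon_n$, because the maximum of $f$ over a spurious component can reach nearly $c+2\tilde\varepsilon_n$. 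You also need that maximum to be attained, for example by requiring bounded components.
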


Theorem~\ref{thm:clustering_consistency} is the main result of our theoretical analysis, as it establishes MPD contraction rates for DBC instantiated as level set clustering. In particular, the number of clusters is consistently recovered, and the same contraction rate as in sup-norm density estimation is achieved for the entire clustering structure, under mild regularity conditions on the true density and its upper-level set components.

We conclude by noting that, while we focus on level set clustering for technical convenience, inspecting of our proofs in the Appendix clarifies that the key to clustering consistency is strong sup-norm control of the estimated density. Once that is verified, level set clustering consistency follows. We suspect that for other DBC algorithms that take the estimated density as an input, sup-norm control will also suffice for consistency of the induced clustering structure. For instance, our proof readily extends to consistency at multiple, but finitely many level set clusterings (for different values of $c$), which is at the basis hierarchical level set clustering algorithms \citep{campello2013density}. We refer to \cite{rinaldo2010generalized,rinaldo2012stability,wang2019dbscan} for further discussion.

\section{Applications}\label{sec:applications}

In this section, we illustrate our methodology on two datasets. Before doing so, we briefly discuss the choice of an important hyperparameter for the score-based MPDs considered here, namely the base learning rate $\eta_0>0$. Intuitively, this parameter controls the spread of the MPD around the baseline frequentist estimator, although this interpretation is only heuristic for the complex models we consider. Crucially, since $\eta_0$ determines the step size of the gradient updates in the predictive resampling procedure, it must be chosen to ensure numerical stability, avoiding gradient explosion, while still allowing sufficient exploration of the space of densities to capture meaningful uncertainty.

In practice, we adopt the following heuristic. First, we initialize the resampling procedure with a step size $\eta_0/n$ of the same order as that used by the optimizer employed to train the baseline estimator \citep{kingma2015adam,loshchilov2019decoupled}. Second, we adjust $\eta_0$ to ensure numerical stability. For simpler models, it would also be possible to incorporate second-order information, for instance by choosing a step size aligned with the inverse Fisher information matrix, which can lead to faster convergence \citep{fong2024asymptotics}. However, this results in a modified algorithm that is not feasible for our large-scale applications.

\subsection{Handwritten digits}

Our first application focuses on $n=5{,}000$ training images of handwritten digits from the MNIST dataset \citep{lecun1998gradient}, containing a balanced number of digits 3 and 8, two classes that are visually similar. We also considered an out-of-sample (OOS) evaluation set of $1{,}000$ unseen digits (500 per class) to demonstrate a key strength of our fully density-based approach: the latter allows us to seamlessly cluster new points in the sample space, including new data, a task that is notoriously cumbersome for standard model-based mixture models.

As a preprocessing step, we embed the images into a latent space of dimension $p=24$ using a convolutional autoencoder fitted to the training data. The same type of MAF density estimator used in Example~\ref{ex:circle_density} is then fitted to the encoded training data. Clustering is subsequently performed on the combined set of training and OOS encoded images using the ToMATo algorithm \citep{chazal2013persistence,gudhi:urm}, a DBC method that identifies clusters as persistent connected components across level sets.\footnote{More specifically, ToMATo can be viewed as a refinement of level set clustering: it slices the target density over a grid of levels and tracks the persistence of connected components across levels. This provides additional robustness to clusters of different prominence and to spurious density peaks due to estimation error. As in standard level set clustering, it requires only density evaluations and a neighborhood graph on the set of points to be clustered. Further implementation details are provided in the Appendix.} The full pipeline, including baseline training, density resampling (500 independent times), and clustering, took 4 minutes and 24 seconds on a single NVIDIA RTX A4000 GPU. Training and resampling were carried out using the \texttt{surjectors} library \citep{dirmeier2024surjectors} with JAX parallelization \citep{jax2018github}. This very modest wall time makes our approach appealing compared to traditional Bayesian approaches based on slow-mixing MCMC in high dimensions.

Before discussing the results, we note that while MNIST is a standard benchmark for supervised classification, our goal here is unsupervised clustering. In particular, the class labels are not used during training and serve only for evaluation. This makes the task substantially more challenging, especially when focusing on visually similar digits such as 3 and 8, which share a vertically aligned double-loop structure.

The left panel of Figure~\ref{fig:mnist_combined} shows that the MPD is sharply concentrated on configurations with two clusters, although a non-negligible amount of mass is assigned to a single cluster, reflecting the similarity between digits 3 and 8. This is consistent with the MPD co-clustering matrix in the center panel, subsetted to the OOS digits, which shows that the inferred clustering largely agrees with the true labels. What is more, applying the methodology of \cite{bariletto2025conformalized}, we find that the true labeling belongs to a credible set with guaranteed 90\% coverage under the MPD, illustrating the usefulness of the proposed uncertainty quantification in recovering the underlying structure, even for new data. Finally, to better understand the source of MPD uncertainty, the right panel displays some of the digits with lowest (across training and OOS sets) pairwise co-clustering certainty (cf.\ Example~\ref{ex:circle_clust}), which correspond to visually ambiguous cases (e.g., 3's with closed loops or 8's with open loops).

\begin{figure}[t]
    \centering
    \includegraphics[width=0.95\linewidth]{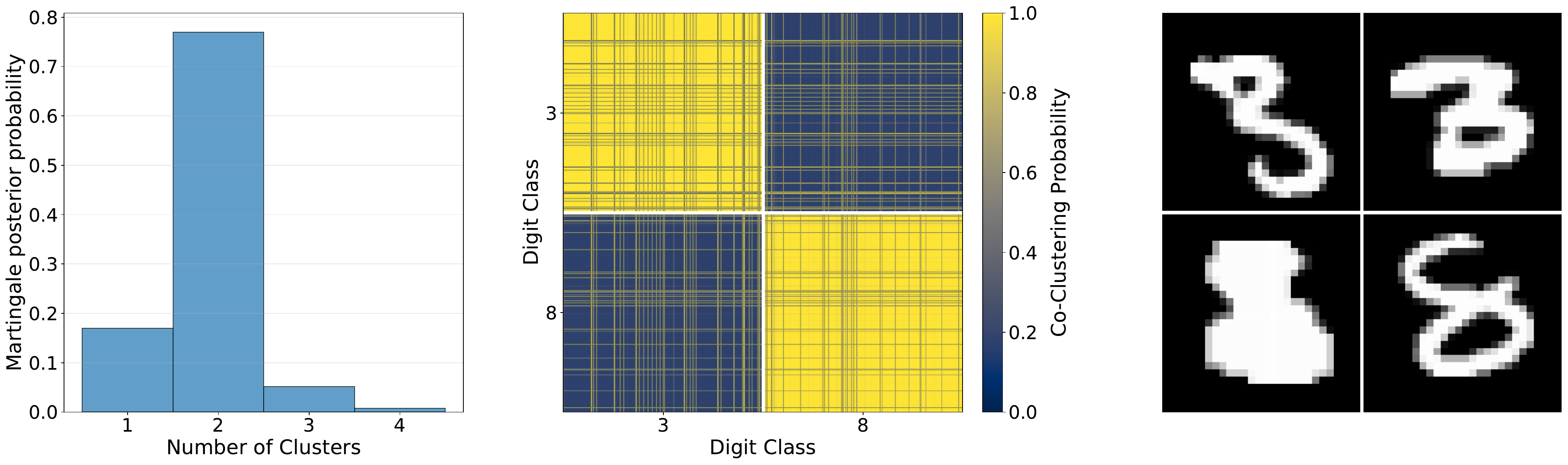}
    \caption{MNIST digits experiment. Left: Martingale posterior distribution of the number of clusters. Center: co-clustering matrix of out-of-sample digits. Right: Selection of digits with low pairwise co-clustering certainty.}
    \label{fig:mnist_combined}
\end{figure}

\subsection{Bone marrow single-cell RNA sequencing data}

Second, we applied our method to the Tabula Sapiens Bone Marrow dataset \citep{tabulasapiens2022,quake2025tabula}, a multi-donor single-cell RNA sequencing (scRNA-seq) atlas of human bone marrow spanning 26 annotated cell types and $n=27{,}112$ individual cells, accessed via the CZ CELLxGENE portal. As is standard in scRNA-seq data analyses, we used PCA for dimension reduction and recorded the first $p=10$ principal components as gene expression features for each cell. The baseline training, density resampling (500 times independently) and clustering procedure in total took 4 minutes and 1 second, using the same hardware and software as for MNIST application.

Figure~\ref{fig:scrna_pretrained_clust} compares the annotated cell types with the baseline ToMATo clustering derived from our density estimate \citep[using UMAP scores for visualization,][]{becht2019dimensionality}. The density-based approach recovers the major macro-structures of the bone marrow landscape, grouping developmentally related lineages into unified clusters, and identifying fewer clusters than the 26 cell types. This is biologically meaningful, since most cells in the bone marrow are immature precursors existing along a developmental continuum \citep{pellin2019comprehensive}.

\begin{figure}[t]
    \centering
    \includegraphics[width=0.9\linewidth]{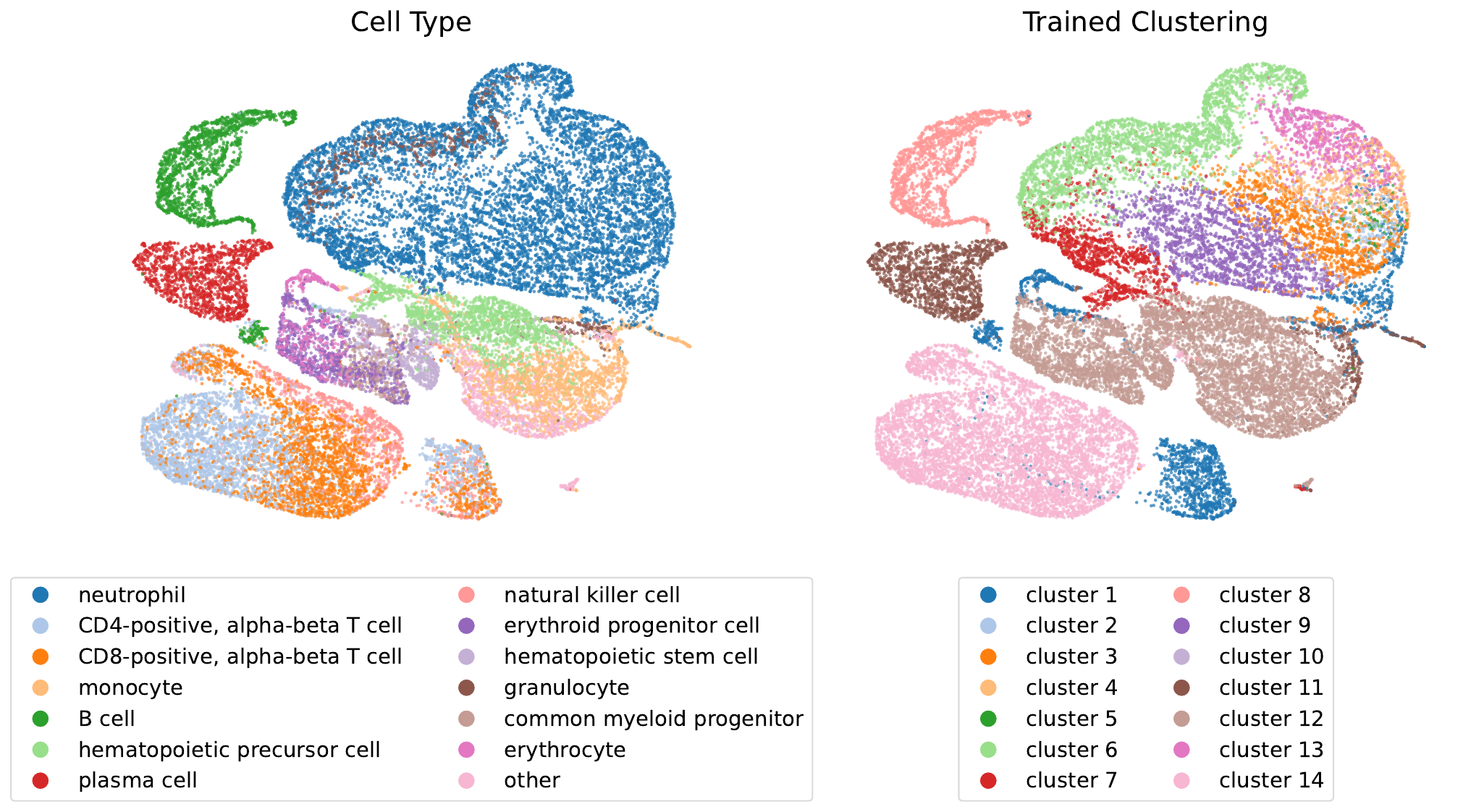}
    \caption{Cell types and trained clustering in UMAP coordinates for bone marrow scRNA-seq data.}
    \label{fig:scrna_pretrained_clust}
\end{figure}

This dynamic landscape is further captured by the posterior co-clustering probabilities aggregated at the cell-type level, as displayed in the left panel of Figure~\ref{fig:scrna_combined}. The matrix reveals block structures that align with established developmental hierarchies. For example, a stable co-clustering block emerges among T cells (CD4-positive, CD8-positive, regulatory) and Natural Killer cells, consistent with their shared lymphoid lineage. Conversely, B cells and plasma cells form distinct, isolated blocks with minimal cross-type co-clustering, reflecting their well-defined transcriptional identities. The matrix also exhibits a larger, more variable block encompassing various monocytes, macrophages, and progenitor cells (myeloid and erythroid lineages). The elevated uncertainty and cross-type co-clustering in this region is consistent with the fact that these cell types complete their maturation in the periphery, so what is captured in the bone marrow are mainly immature or naive cells, whose transcriptional boundaries between progenitor and intermediate states are diffuse \citep{velten2017human}.

Furthermore, our framework is able to recover within-type substructure. Notably, the neutrophil block in the right panel of Figure~\ref{fig:scrna_combined} exhibits a relatively lower within-type co-clustering probability compared to other populations, already hinting at the presence of internal substructure within this annotation. The right panel of Figure~\ref{fig:scrna_combined} makes this explicit by visualizing the cell-level MPD co-clustering matrix restricted to the 8,677 cells annotated as ``neutrophil.'' The block-diagonal structure reveals about 7 distinct sub-populations within this single cell type. Unlike T cells or myeloid lineages, neutrophils complete their entire maturation within the bone marrow, making this population inherently more heterogeneous in this tissue. Specifically, neutrophil maturation proceeds through 6 well-established morphological stages: Myeloblast $\to$ Promyelocyte $\to$ Myelocyte $\to$ Metamyelocyte $\to$ Band Cell $\to$ Mature Neutrophil \citep{xie2020single}, which is roughly consistent with the number of sub-groups identified by our method. Overall, these results show that MPD-based uncertainty quantification for density-based clustering can reveal biologically meaningful structure in scRNA-seq data with large cell counts, while maintaining an extremely competitive computational cost on modern GPU hardware

\begin{figure}[t]
    \centering
    \includegraphics[width=0.95\textwidth]{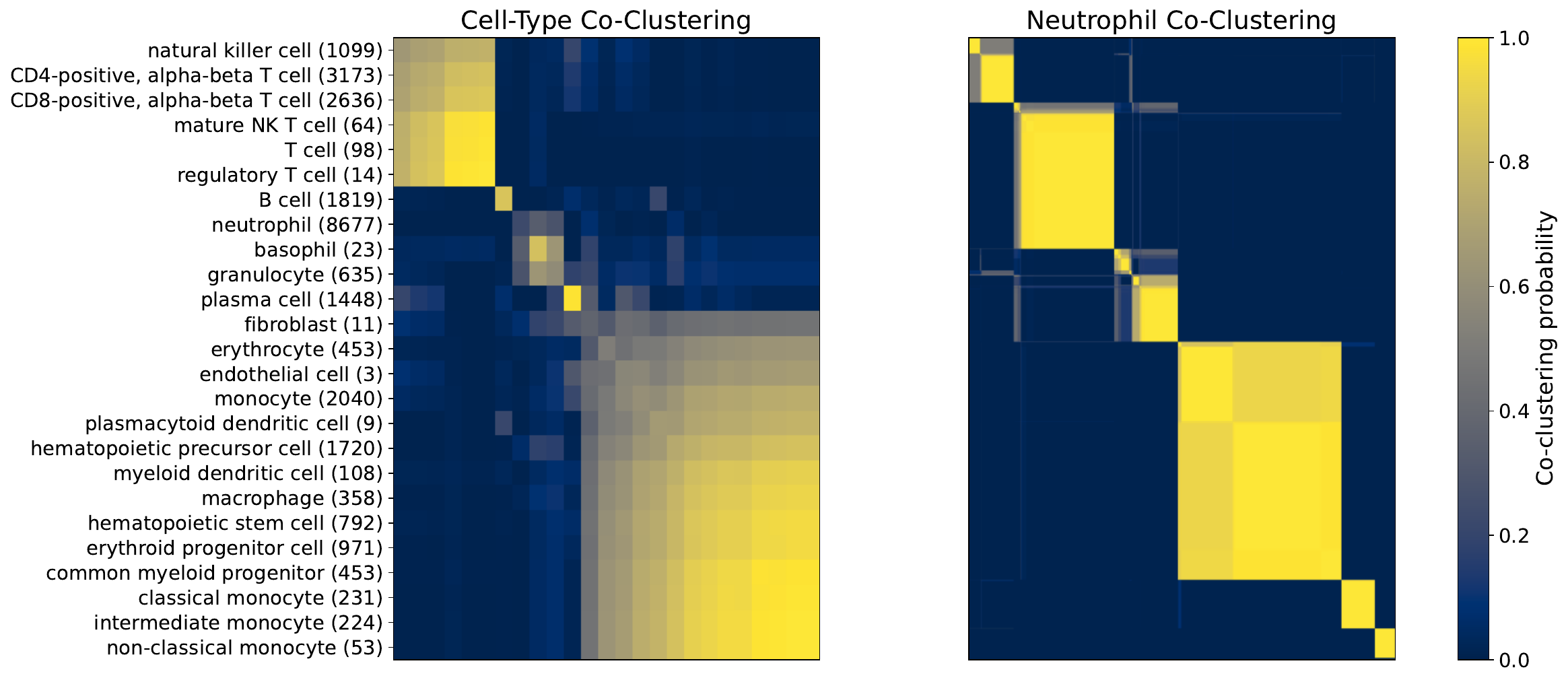}
    
    \caption{MPD co-clustering for bone marrow scRNA-seq data. Left: co-clustering matrix aggregated at the cell-type level. Right: individual-cell level co-clustering matrix restricted to neutrophil cells.}
    \label{fig:scrna_combined}
\end{figure}

\section{Discussion}\label{sec:discussion}

This article introduces a framework for uncertainty quantification in clustering that combines martingale posterior distributions with density-based clustering. Departing from the model-based paradigm, we define clusters as functionals of an estimated density and propagate uncertainty by applying the clustering procedure to posterior samples of the density obtained through predictive resampling. This yields a distribution over clustering structures, which can be used to perform inference on quantities of interest such as the number of clusters, co-clustering patterns, and global grouping properties.

We have demonstrated several key advantages of the proposed framework. By relying on density-based clustering, the method accommodates clusters with flexible and irregular shapes without requiring restrictive parametric assumptions. Its formulation in terms of a differentiable density model makes it naturally compatible with modern neural network-based estimators, allowing it to scale beyond low-dimensional settings and to large sample sizes. The use of martingale posterior updates leads to a fully gradient-based procedure that can be parallelized on modern GPU hardware, as reflected in the computational performance observed in our experiments. The approach can also be applied on top of pre-trained density estimators, requiring only a resampling step to quantify uncertainty. In this sense, the method can be viewed as a scalable procedure for uncertainty quantification around black-box density estimators that provide point estimates, and density-based applications beyond clustering are fully compatible with our pipeline. From a theoretical perspective, defining clustering as a function of the density yields an identifiable inference target and supports a clean asymptotic analysis. Finally, since clustering is defined on the entire sample space, the method can be used to assign new observations to clusters, rather than being restricted to the training data, as we have shown in our application to digit image data.

Some limitations remain. Most importantly, the approach depends on the quality of the underlying density estimate, which remains challenging in very high dimensions. Neural density estimators help in settings where the data, although high-dimensional, concentrate near lower-dimensional structures that can be learned from the data itself. However, when the distribution is genuinely diffuse across many dimensions, standard statistical limitations apply and clustering performance is constrained accordingly. In addition, the current framework relies on the ability to evaluate densities efficiently, which is natural for normalizing flows but less straightforward for other state-of-the-art generative models.

Several directions for future work remain open. On the methodological side, it would be of interest to extend the framework to density estimators that do not allow for straightforward density evaluation but achieve strong generative performance, such as diffusion models \citep{ho2020denoising}. On the theoretical side, further work is needed to go beyond the level set clustering setting analyzed here and to study the frequentist coverage properties of martingale posterior distributions, both for density estimation and for the induced clustering.

\clearpage

\bigskip
  \begin{center}
    {\LARGE\bf Appendix -- Proofs and Experiment Details}
\end{center}
  \medskip

  This appendix details the theoretical proofs and numerical experiments supporting the main article

\section*{Proofs}

In this section of the Appendix, we report all the proofs of the theoretical results presented in the main body of the paper.

\subsection*{Proof of Theorem 1}
Let $(\mathcal{F}_k)_{k \ge 0}$ be the filtration where $\mathcal{F}_0$ is the trivial $\sigma$-algebra under $\mathbb{P}$ and $\mathcal{F}_k = \sigma(Y_1, \dots, Y_k)$ for $k \ge 1$. The update rule is:
$$ \theta_{n,k} = \theta_{n,k-1} + \eta_{n,k}s(Y_k;\theta_{n,k-1}) $$
Taking the conditional expectation under $\mathbb{P}$ with respect to $\mathcal{F}_{k-1}$ and applying (i):
$$ \mathbb{E}[\theta_{n,k} \mid \mathcal{F}_{k-1}] = \theta_{n,k-1} + \eta_{n,k}\mathbb{E}_{Y_k \sim f_{\theta_{n,k-1}}}[s(Y_k;\theta_{n,k-1})] = \theta_{n,k-1} $$
Thus, $(\theta_{n,k})_{k \ge 0}$ is a martingale. To bound its $L^2$ norm:
$$ \mathbb{E}[\|\theta_{n,k}\|^2 \mid \mathcal{F}_{k-1}] = \|\theta_{n,k-1}\|^2 + 2\eta_{n,k}\mathbb{E}[\langle \theta_{n,k-1}, s(Y_k;\theta_{n,k-1}) \rangle \mid \mathcal{F}_{k-1}] + \eta_{n,k}^2\mathbb{E}[\|s(Y_k;\theta_{n,k-1})\|^2 \mid \mathcal{F}_{k-1}] $$
Since $\mathbb{E}[s(Y_k;\theta_{n,k-1}) \mid \mathcal{F}_{k-1}] = 0$, the cross-term vanishes. Applying (ii):
$$ \mathbb{E}[\|\theta_{n,k}\|^2 \mid \mathcal{F}_{k-1}] \le \|\theta_{n,k-1}\|^2 + \eta_{n,k}^2 (A + B\|\theta_{n,k-1}\|^2) = (1 + B\eta_{n,k}^2)\|\theta_{n,k-1}\|^2 + A\eta_{n,k}^2 $$
Taking the unconditional expectation $\mathbb{E}$ over $\mathbb{P}$ and letting $v_k := \mathbb{E}[\|\theta_{n,k}\|^2]$, with $v_0 = \|\theta_{n,0}\|^2$:
$$ v_k \le (1 + B\eta_{n,k}^2)v_{k-1} + A\eta_{n,k}^2 $$
Unrolling to $k=1$:
$$ v_k \le \|\theta_{n,0}\|^2 \prod_{j=1}^k (1 + B\eta_{n,j}^2) + A \sum_{j=1}^k \eta_{n,j}^2 \prod_{i=j+1}^k (1 + B\eta_{n,i}^2) $$
Using $1 + x \le e^x$ for $x \ge 0$:
$$ \prod_{j=1}^k (1 + B\eta_{n,j}^2) \le \exp\left(B \sum_{j=1}^k \eta_{n,j}^2\right) $$
For $\eta_{n,j} = \eta_0 / (n+j-1)$, let $S := \sum_{j=1}^\infty \eta_{n,j}^2 < \infty$. Then:
$$ \sup_{k \ge 1} \mathbb{E}[\|\theta_{n,k}\|^2] \le (\|\theta_{n,0}\|^2 + A S) e^{BS} < \infty $$
The sequence is an $L^2$-bounded martingale, converging almost surely and in $L^2(\mathbb{P})$ by Doob's Martingale Convergence Theorem.

\subsection*{Proof of Theorem 2}

By (C1) and (C2), the Dominated Convergence Theorem applies, yielding
$$ \int_{\mathbb{R}^p} \nabla_\theta f_\theta(y) dy = \nabla_\theta \int_{\mathbb{R}^p} f_\theta(y) dy = 0. $$
Using the identity $\nabla_\theta f_\theta(y) = f_\theta(y) \nabla_\theta \log f_\theta(y) = f_\theta(y) s(y;\theta)$, it follows that $\mathbb{E}_{Y \sim f_\theta}[s(Y;\theta)] = 0$, establishing (i).

To establish (ii), note that $\log p_Z(z) = -\frac{1}{2}\|z\|^2 - \frac{p}{2}\log(2\pi)$. The score function is
$$ s(y;\theta) = - \left( \partial_\theta G_\theta(y) \right)^\top G_\theta(y) + \nabla_\theta \log \left| \det \partial_y G_\theta(y) \right|. $$
Evaluating the expectation over $Y \sim f_\theta$ is equivalent to taking the expectation over $Z \sim p_Z$ at $Y = T_\theta(Z)$. Since $G_\theta(T_\theta(Z)) = Z$, substitution yields
$$ s(T_\theta(Z);\theta) = - \left( \partial_\theta G_\theta(y) \big|_{y = T_\theta(Z)} \right)^\top Z + \nabla_\theta \log \left| \det \partial_y G_\theta(y) \right| \big|_{y = T_\theta(Z)}. $$
Applying the inequality $\|a + b\|^2 \le 2\|a\|^2 + 2\|b\|^2$, the sub-multiplicative property $\|M^\top v\| \le \|M\|_F \|v\|$, and (C3):
\begin{align*}
    \|s(T_\theta(Z);\theta)\|^2 &\le 2 \left\| \left( \partial_\theta G_\theta(y) \big|_{y = T_\theta(Z)} \right)^\top Z \right\|^2 + 2 \left\| \nabla_\theta \log \left| \det \partial_y G_\theta(y) \right| \big|_{y = T_\theta(Z)} \right\|^2 \\
    &\le 2 \left\| \partial_\theta G_\theta(y) \big|_{y = T_\theta(Z)} \right\|_F^2 \|Z\|^2 + 2 \left\| \nabla_\theta \log \left| \det \partial_y G_\theta(y) \right| \big|_{y = T_\theta(Z)} \right\|^2 \\
    &\le 2 h_1(Z)^2 (1 + \|\theta\|)^2 \|Z\|^2 + 2 h_2(Z)^2 (1 + \|\theta\|)^2 \\
    &= 2 (1 + \|\theta\|)^2 \left( \|Z\|^2 h_1(Z)^2 + h_2(Z)^2 \right).
\end{align*}
Taking the expectation with respect to $Z \sim p_Z$ gives
$$ \mathbb{E}_{Y \sim f_\theta}[\|s(Y;\theta)\|^2] \le 2 (1 + \|\theta\|)^2 \mathbb{E}_{Z \sim p_Z}\left[ \|Z\|^2 h_1(Z)^2 + h_2(Z)^2 \right]. $$
By (C3), $K := \mathbb{E}_{Z \sim p_Z}\left[ \|Z\|^2 h_1(Z)^2 + h_2(Z)^2 \right] < \infty$. Using the expansion $(1 + \|\theta\|)^2 \le 2 + 2\|\theta\|^2$ yields
$$ \mathbb{E}_{Y \sim f_\theta}[\|s(Y;\theta)\|^2] \le 4K + 4K\|\theta\|^2. $$
Setting $A = 4K$ and $B = 4K$ establishes (ii).

\subsection*{Proof of Theorem 3}

Define $\mathcal{E}_n := \{ X_{1:n} : d(f_{\theta_{n,0}}, f_*) \le \delta/2 \}$. By consistency of $f_{\theta_{n,0}}$ and $\varepsilon_n \to 0$, $\lim_{n \to \infty} F_*^n(\mathcal{E}_n^c) = 0$. Let $n$ be sufficiently large such that $\tilde\varepsilon_n \le \delta$.

We now fix $X_{1:n} \in \mathcal{E}_n$. Define the stopping time:
$$ \tau := \inf \left\{ k \ge 1 : \|\theta_{n,k} - \theta_{n,0}\| > \left(\frac{\delta}{2L}\right)^{1/\gamma} \right\} $$
For $k \le \tau$, Assumption (I) yields $d(f_{\theta_{n,k}}, f_{\theta_{n,0}}) \le \delta/2$. By the triangle inequality, $d(f_{\theta_{n,k}}, f_*) \le \delta$, implying $\theta_{n,k} \in \mathcal{V}_\delta(f_*)$ for $k \le \tau$.

Because of the 0 expectation of the score at all $\theta\in\Theta$, $\theta_{n,k}$ is a martingale and $\tilde{\theta}_{n,k} := \theta_{n, k \wedge \tau}$ is also a (stopped) martingale. For any finite $N \ge 1$, by condition (II) we have
\begin{align*}
\mathbb{E}_{\mathbb{P}}\big[\|\tilde{\theta}_{n,N} - \theta_{n,0}\|^2\big]
&= \sum_{k=1}^N \eta_{n,k}^2 \mathbb{E}_{\mathbb{P}}\big[\|s(Y_k; \tilde\theta_{n,k-1})\|^2 \mathbf{1}_{\{k \le \tau\}}\big] \\
&\le M_\delta \sum_{k=1}^N \frac{\eta_0^2}{(n+k-1)^2} \\
&\le \frac{M_\delta \eta_0^2}{n-1}.
\end{align*}
Since $\sup_{N \ge 1} \mathbb{E}_{\mathbb{P}}\big[\|\tilde{\theta}_{n,N} - \theta_{n,0}\|^2\big] < \infty$, $(\tilde{\theta}_{n,N})_{N \ge 1}$ is an $L^2$-bounded martingale. By Doob's Martingale Convergence Theorem, $\tilde{\theta}_{n,\infty} := \lim_{N\to\infty}\tilde{\theta}_{n,N}$ exists $\mathbb{P}$-a.s. and in $L^2(\mathbb{P})$, satisfying:
$$ \mathbb{E}_{\mathbb{P}}\big[\|\tilde{\theta}_{n,\infty} - \theta_{n,0}\|^2\big] \le \frac{M_\delta \eta_0^2}{n-1}. $$

By the union bound:
\begin{align*}
\Pi_n\Big( d(f_\theta, f_{\theta_{n,0}}) > \frac{\tilde\varepsilon_n}{2} \Big) 
&\le \mathbb{P}(\tau < \infty) + \mathbb{P}\left( d(f_{\tilde\theta_{n,\infty}}, f_{\theta_{n,0}}) > \frac{\tilde\varepsilon_n}{2} \right).
\end{align*}

Applying Doob's $L^2$ maximal inequality to the submartingale $\|\tilde{\theta}_{n,k} - \theta_{n,0}\|$ over finite time $N$:
\begin{align*}
\mathbb{P}\left( \sup_{1 \le k \le N} \|\tilde{\theta}_{n,k} - \theta_{n,0}\| > \left(\frac{\delta}{2L}\right)^{1/\gamma} \right)
&\le \frac{\mathbb{E}_{\mathbb{P}}[\|\tilde{\theta}_{n,N} - \theta_{n,0}\|^2]}{(\delta / 2L)^{2/\gamma}} \\
&\le \frac{M_\delta \eta_0^2}{(n-1)(\delta / 2L)^{2/\gamma}}.
\end{align*}
Taking $N \to \infty$:
\begin{align*}
\mathbb{P}(\tau < \infty) &= \mathbb{P}\left( \sup_{k \ge 1} \|\tilde{\theta}_{n,k} - \theta_{n,0}\| > \left(\frac{\delta}{2L}\right)^{1/\gamma} \right) \\
&\le \frac{M_\delta \eta_0^2}{(n-1)(\delta / 2L)^{2/\gamma}}.
\end{align*}

Applying Assumption (I) and Markov's inequality to the limit:
\begin{align*}
\mathbb{P}\left( d(f_{\tilde\theta_{n,\infty}}, f_{\theta_{n,0}}) > \frac{\tilde\varepsilon_n}{2} \right) 
&\le \mathbb{P}\left( \|\tilde\theta_{n,\infty} - \theta_{n,0}\| > \left(\frac{\tilde\varepsilon_n}{2L}\right)^{1/\gamma} \right) \\
&\le \frac{\mathbb{E}_{\mathbb{P}}[\|\tilde{\theta}_{n,\infty} - \theta_{n,0}\|^2]}{(\tilde\varepsilon_n / 2L)^{2/\gamma}} \\
&\le \frac{M_\delta \eta_0^2}{(n-1)(\tilde\varepsilon_n / 2L)^{2/\gamma}}.
\end{align*}

Let $B_n(\tilde\varepsilon_n)$ denote the sum of the upper bounds for $\mathbb{P}(\tau < \infty)$ and $\mathbb{P}(d(f_{\tilde\theta_{n,\infty}}, f_{\theta_{n,0}}) > \tilde\varepsilon_n/2)$. By the triangle inequality, $d(f_\theta, f_*) > \tilde\varepsilon_n$ implies either $d(f_\theta, f_{\theta_{n,0}}) > \tilde\varepsilon_n/2$ or $d(f_{\theta_{n,0}}, f_*) > \tilde\varepsilon_n/2$. Thus:
\begin{align*}
\Pi_n\big( d(f_\theta, f_*) > \tilde\varepsilon_n \big) 
&\le \Pi_n\Big( d(f_\theta, f_{\theta_{n,0}}) > \frac{\tilde\varepsilon_n}{2} \Big) + 1\left\{d(f_{\theta_{n,0}}, f_*) > \frac{\tilde\varepsilon_n}{2}\right\}.
\end{align*}

Taking the expectation with respect to $F_*^n$:
\begin{align*}
\mathbb{E}_{F_*^n}\big[ \Pi_n\big( d(f_\theta, f_*) > \tilde\varepsilon_n \big) \big] 
&\le \mathbb{E}_{F_*^n}\left[ \Pi_n\Big( d(f_\theta, f_{\theta_{n,0}}) > \frac{\tilde\varepsilon_n}{2} \Big) \mathbf{1}_{\mathcal{E}_n} \right] \\
& + \mathbb{E}_{F_*^n}\left[ \Pi_n\Big( d(f_\theta, f_{\theta_{n,0}}) > \frac{\tilde\varepsilon_n}{2} \Big) \mathbf{1}_{\mathcal{E}_n^c} \right] \\
& + F_*^n\left(d(f_{\theta_{n,0}}, f_*) > \frac{\tilde\varepsilon_n}{2}\right) \\
&\le B_n(\tilde\varepsilon_n) + F_*^n(\mathcal{E}_n^c) + F_*^n\left(d(f_{\theta_{n,0}}, f_*) > \frac{\tilde\varepsilon_n}{2}\right).
\end{align*}

Since $\lim_{n \to \infty} n\tilde\varepsilon_n^{2/\gamma} = \infty$, we have $\lim_{n \to \infty} B_n(\tilde\varepsilon_n) = 0$. Given $\tilde\varepsilon_n \gtrsim \varepsilon_n$ and the $(d, \varepsilon_n)$-consistency of $f_{\theta_{n,0}}$, we have $\lim_{n \to \infty} F_*^n(\mathcal{E}_n^c) = 0$ and $\lim_{n \to \infty} F_*^n(d(f_{\theta_{n,0}}, f_*) > \tilde\varepsilon_n/2) = 0$. Therefore:
$$ \lim_{n \to \infty} \mathbb{E}_{F_*^n}\big[ \Pi_n\big( d(f_\theta, f_*) > \tilde\varepsilon_n \big) \big] = 0. $$
Convergence in $L^1(F_*^n)$ implies convergence in $F_*^n$-probability, yielding the result.

\subsection{Proof of Theorem 4}
Following the proof of Theorem 3 in \cite{cuevas2006plug}, we have that
    \begin{equation*}
        \lambda(\mathcal{L}(f)\triangle \mathcal{L}(f_*)) \leq \lambda(\{x\in\mathcal X : |f_*(x)-c|\leq \varepsilon\}) + \lambda(\{x\in\mathcal X : |f_*(x)-f(x)|> \varepsilon\})
    \end{equation*}
    for any $\varepsilon>0$. Therefore, for any $n\in\mathbb{N}$ and $\delta>0$,
    \begin{align*}
        \Pi_n& (\lambda(\mathcal{L}(f)\triangle \mathcal{L}(f_*))>\delta) \\
        & \leq 1\{\lambda(\{x\in\mathcal X : |f_*(x)-c|\leq \varepsilon\})>\delta/2\}  + \Pi_n(\lambda(\{x\in\mathcal X : |f_*(x)-f(x)|> \varepsilon\})>\delta/2)\\
        & \leq1\{\lambda(\{x\in\mathcal X : |f_*(x)-c|\leq \varepsilon\})>\delta/2\} + \Pi_n(d(f,f_*)>\varepsilon).
    \end{align*}
    Now replace $\varepsilon=\tilde\varepsilon_n$ and $\delta=2C_0\tilde\varepsilon_n$. For $n$ large enough so that $\tilde\varepsilon_m\leq \varepsilon_0$ for all $n\geq m$, we obtain
    \begin{equation*}
        \Pi_n(\lambda(\mathcal{L}(f)\triangle \mathcal{L}(f_*))>2C_0\tilde\varepsilon_n) \leq \Pi_n(d(f,f_*)>\tilde\varepsilon_n).
    \end{equation*}
    Applying the definition of MPD contraction rate in the statement of Theorem 3 completes the proof.

\subsection{Proof of Theorem 5}

Define the event $E_n = \{ f : \|f - f_*\|_{\mathcal L^\infty} \leq \tilde{\varepsilon}_n \}$. By the posterior contraction assumption, $\lim_{n\to\infty}\Pi_n(E_n^c) = 0$ in $F_*^n$-probability. Hence, it suffices to show that, for sufficiently large $n$, $f \in E_n$ implies that $k_f = k_{f_*}$ and $\inf_{\sigma \in S_{k_{f_*}}} \sum_{j=1}^{k_{f_*}} \lambda\big(C_j(f_*) \triangle C_{\sigma(j)}(f)\big) \leq k_{f_*} C_0\tilde\varepsilon_n$. To that end, fix $n$ large enough such that $\tilde{\varepsilon}_m < \min(\delta_0,\eta_0)$ for all $m\ge n$, and take $f \in E_n$.
    
\textbf{Proof of consistency for $k_{f_*}$.} We first verify that there is a one-to-one correspondence between $\mathcal{C}(f_*)$ and $\mathcal{C}(f)$. Suppose a cluster $C(f) \in \mathcal{C}(f)$ intersects two distinct true clusters $C_i(f_*)$ and $C_j(f_*)$. Since $C(f)$ is path-connected, there exists a path $\gamma$ contained in it and connecting the two sets. For all $x$ in the range of $\gamma$, $f(x) \geq c$, which implies $f_*(x) \geq f(x) - \tilde{\varepsilon}_n \geq c - \tilde{\varepsilon}_n>c-\delta_0$. However, by condition (b), any path between $C_i(f_*)$ and $C_j(f_*)$ must cross a point $x_s$ where $f_*(x_s) \leq c - \delta_0$, yielding a contradiction. Thus, any cluster of $f$ intersects at most one cluster of $f_*$.
    
Next, consider the core $K_j^{\tilde{\varepsilon}_n} = \{x \in C_j(f_*) : f_*(x) \geq c + \tilde{\varepsilon}_n\}$. By condition (c), this set is path-connected. For any $x \in K_j^{\tilde{\varepsilon}_n}$, we have $f(x) \geq f_*(x) - \tilde{\varepsilon}_n \geq c$, so $K_j^{\tilde{\varepsilon}_n} \subseteq \mathcal{L}(f)$. Since it is path-connected, $K_j^{\tilde{\varepsilon}_n}$ must be contained in a single component $C_j(f)$ of $\mathcal{L}(f)$.
    
Combining these findings, for every true cluster $C_j(f_*)$, there exists a unique estimated cluster $C_j(f)$ containing $K_j^{\tilde\varepsilon_n}$ but no other $K_i^{\tilde\varepsilon_n}$, which implies $k_f \geq k_{f_*}$. Now consider any $C(f) \in \mathcal{C}(f)$. Since $\mathcal{L}(f) \subseteq \{x : f_*(x) \geq c - \tilde{\varepsilon}_n\}$, $C(f)$ is a path-connected subset of the level set $\mathcal{L}_{c-\tilde{\varepsilon}_n}(f_*)$, and is therefore included in one of the clusters of $\mathcal{L}_{c-\tilde{\varepsilon}_n}(f_*)$. By condition (c), there are exactly $k_{f_*}$ such clusters, so that $k_f\leq k_{f_*}$. This completes the proof of the fact that $k_f = k_{f_*}$.

\textbf{Proof of clustering consistency.} Define a bijection $\sigma$ mapping the index $j$ of each true cluster $C_j(f_*)$ to the index of the unique estimated cluster $C_{\sigma(j)}(f)$ containing the core $K_j^{\tilde{\varepsilon}_n}$, whose existence and uniqueness is implied by the arguments in the proof of the identity $k_f=k_{f_*}$. Consider the symmetric difference for each pair. If $x \in C_{\sigma(j)}(f) \setminus C_j(f_*)$, then $f(x) \geq c$ and $f_*(x) < c$, implying $c > f_*(x) \geq f(x) - \tilde{\varepsilon}_n \geq c - \tilde{\varepsilon}_n$. If $x \in C_j(f_*) \setminus C_{\sigma(j)}(f)$, then $x$ must lie outside the core $K_j^{\tilde{\varepsilon}_n}$, so $c \leq f_*(x) < c + \tilde{\varepsilon}_n$.
    
    This shows that $C_{\sigma(j)}(f) \triangle C_j(f_*) \subseteq \{x : |f_*(x) - c| \leq \tilde{\varepsilon}_n\}$. Using condition (a),
    \begin{equation*}
        \lambda(C_{\sigma(j)}(f) \triangle C_j(f_*)) \leq \lambda(\{x : |f_*(x) - c| \leq \tilde{\varepsilon}_n\}) \leq C_0 \tilde{\varepsilon}_n.
    \end{equation*}
    Summing over all clusters, we obtain
    \begin{equation*}
        \sum_{j=1}^{k_{f_*}} \lambda\big(C_j(f_*) \triangle C_{\sigma(j)}(f)\big) \leq k_{f_*} C_0 \tilde{\varepsilon}_n,
    \end{equation*}
    which completes the proof.

\section*{Experiment details}

In this section of the Appendix, we provide full details on the numerical experiments presented in the main body of the article.

\subsection*{Gaussian mixture illustration}

\textbf{Dataset.} We generate $n = 100$ one-dimensional observations from a synthetic two-component Gaussian Mixture Model (GMM). The true underlying distribution consists of two equally weighted components ($w_1 = w_2 = 0.5$) with means at $\mu_1 = -1.0$ and $\mu_2 = 1.0$, and shared standard deviations of $\sigma_1 = \sigma_2 = 0.4$. 

\textbf{Density Estimator.} As a model family, we we choose an over-fitted GMM with $K = 4$ components. We train the baseline model via Expectation-Maximization (EM) for a maximum of 500 iterations, selecting the best fit across 5 independent random initializations.

\textbf{Predictive Resampling.} We generate $T = 50$ independent resamples, each running for $N = 2{,}000$ update steps with base learning rate $\eta_0 = 1.0$ and effective schedule $\eta_k = \eta_0 / (n + k + 1)$. Gradients are clipped to $[-10, 10]$ for numerical stability. For unconstrained optimization, mixture weights are parameterized via logits $\alpha_j$ such that $w_j = \text{softmax}(\alpha)_j$, and standard deviations via log-scales $\rho_j = \log \sigma_j$. The score function $\nabla_{\theta} \log f_{\theta}(x)$ is computed analytically for each self-generated sample $x$. Letting $\gamma_j(x)$ denote the posterior responsibility of component $j$ for sample $x$ under the Gaussian PDF $\mathcal{N}$, the analytical formulas are:
\begin{align*}
    \gamma_j(x) &= \frac{w_j \mathcal{N}(x \mid \mu_j, \sigma_j^2)}{\sum_{m=1}^K w_m \mathcal{N}(x \mid \mu_m, \sigma_m^2)} \\
    \frac{\partial \log f_\theta(x)}{\partial \mu_j} &= \gamma_j(x) \frac{x - \mu_j}{\sigma_j^2} \\
    \frac{\partial \log f_\theta(x)}{\partial \rho_j} &= \gamma_j(x) \left( \frac{(x - \mu_j)^2}{\sigma_j^2} - 1 \right) \\
    \frac{\partial \log f_\theta(x)}{\partial \alpha_j} &= \gamma_j(x) - w_j
\end{align*}

\textbf{Clustering Algorithm.} The clustering procedure applied to each (trained or resampled) density is described in Algorithm~\ref{alg:dbc}. The density threshold $\tau$ is set to the 20th percentile of the log-density values of the training data under the initially trained model, and is held fixed across all resamples. The neighborhood radius $r$ is computed adaptively from the training data as $1.2$ times the mean nearest-neighbor distance among points above the threshold, and is likewise fixed across resamples. The minimum cluster size is set to $m = 5$.

\begin{algorithm}[t]
\caption{Density-Based Clustering via Upper-Level Sets}
\label{alg:dbc}
\begin{algorithmic}[1]
\Require Data $X = \{x_1, \ldots, x_n\}$, log-density values $\{\log f(x_i)\}_{i=1}^n$, threshold $\tau$, radius $r$, minimum cluster size $m$
\Ensure Cluster labels $\ell_1, \ldots, \ell_n$

\State Initialize $\ell_i \leftarrow -1$ for all $i$
\State Identify core points: $\mathcal{D} \leftarrow \{i : \log f(x_i) > \tau\}$
\State Build radius neighborhood graph $G$ on $\{x_i : i \in \mathcal{D}\}$ with radius $r$
\State Compute connected components of $G$; assign component labels to $\{\ell_i : i \in \mathcal{D}\}$
\State Let $\mathcal{K}_{\text{large}}$ be the set of component labels with $\geq m$ members
\State Let $\mathcal{K}_{\text{small}}$ be the set of component labels with $< m$ members
\ForAll{$i$ with $\ell_i \in \mathcal{K}_{\text{small}}$}
    \State $\ell_i \leftarrow$ label of the nearest point $j$ with $\ell_j \in \mathcal{K}_{\text{large}}$
\EndFor
\ForAll{$i$ with $\ell_i = -1$}
    \State $\ell_i \leftarrow$ label of the nearest point $j$ with $\ell_j \neq -1$
\EndFor
\State Re-index labels to $\{0, 1, \ldots, K-1\}$
\State \Return $\ell_1, \ldots, \ell_n$
\end{algorithmic}
\end{algorithm}

\subsection*{Noisy concentric circles}

\textbf{Dataset.} We generate $n = 5{,}000$ two-dimensional observations from two noisy concentric circles using a fixed random seed, with Gaussian noise of standard deviation $0.15$ and a radius factor of $0.25$. The outer and inner circles contain approximately $4{,}000$ and $1{,}000$ points respectively, reflecting the ratio imposed by the factor parameter. Data are standardized to zero mean and unit variance before fitting.

\textbf{Density Estimator.} We use a Masked Autoregressive Flow \citep{papamakarios2017masked} implemented via the \texttt{surjectors} library \citep{dirmeier2024surjectors}. The architecture consists of 12 alternating MAF layers, each with a MADE conditioner \citep{germain2015made} comprising two hidden layers of width 128, interleaved with reverse permutation layers. The base distribution is a standard two-dimensional Gaussian. The model is trained by maximum likelihood using AdamW \citep{loshchilov2019decoupled} with gradient clipping (global norm $\leq 1$), weight decay $10^{-4}$, and a warmup-cosine decay learning rate schedule peaking at $10^{-3}$ and decaying to $10^{-6}$ over $10{,}000$ epochs with batch size $5{,}000$.

\textbf{Predictive Resampling.} We generate $T = 1{,}000$ independent resamples, each running for $N = 3{,}000$ score-based update steps with base learning rate $\eta_0 = 0.02$ and effective schedule $\eta_k = \eta_0 / (n + k)$. Gradients are clipped to $[-100, 100]$ and NaN values are replaced with zero for numerical stability. The $T$ resampling chains are executed in parallel on a single GPU.

\textbf{Clustering Algorithm.} The clustering procedure applied to each (trained or resampled) density is described in Algorithm~\ref{alg:dbc}. The density threshold $\tau$ is set to the 10th percentile of the log-density values of the training data under the initially trained model $f_{\theta_{n,0}}$, and is held fixed across all resamples. The neighborhood radius $r$ is computed adaptively from the training data as $1.2$ times the mean 10th nearest-neighbor distance among points above the threshold, and is likewise fixed across resamples. The minimum cluster size is set to $m = 100$.

\subsection*{MNIST digits}

\textbf{Dataset.} We consider a training subset of $n = 5{,}000$ images from the MNIST dataset \citep{lecun1998gradient} containing a balanced number of digits 3 and 8, alongside an out-of-sample (OOS) evaluation set of $1{,}000$ unseen images (500 per class). Images are $28 \times 28$ grayscale pixels, normalized to $[0,1]$.

\textbf{Dimensionality Reduction.} Prior to density estimation, images are embedded into a $24$-dimensional latent space using a convolutional autoencoder trained on the $5{,}000$ training images with mean squared error reconstruction loss. The encoder consists of three convolutional layers (32, 64, and 128 channels, with stride-2 convolutions and ReLU activations) followed by a linear projection to the latent space; the decoder mirrors this architecture with transposed convolutions. The autoencoder is trained for 25 epochs using Adam \citep{kingma2015adam} with learning rate $10^{-3}$ and batch size 128. Latent codes for both the training and OOS sets are extracted and independently standardized to zero mean and unit variance before proceeding.

\textbf{Density Estimator.} We fit a MAF \citep{papamakarios2017masked} to the standardized training latent codes. The architecture consists of 16 alternating MAF layers, each with a MADE conditioner \citep{germain2015made} comprising two hidden layers of width 128, interleaved with reverse permutation layers. The base distribution is a standard 24-dimensional Gaussian. The model is trained by maximum likelihood on the training set using Adam \citep{kingma2015adam} with learning rate $10^{-4}$ and batch size 128 for 15 epochs.

\textbf{Predictive Resampling.} Using the training data, we generate $T = 500$ independent resamples, each running for $N = 3{,}000$ score-based update steps with base learning rate $\eta_0 = 0.002$ and schedule $\eta_k = \eta_0 / (n + k)$. The $T$ resampling chains are executed in parallel on a single GPU. The resulting $T$ resampled densities are subsequently evaluated on the combined set of $6{,}000$ training and OOS latent codes.

\textbf{Clustering.} For each resampled density, clustering is performed jointly on the combined latent codes using the ToMATo algorithm \citep{chazal2013persistence} with a $k$-nearest-neighbor graph ($k = 20$, brute-force) as the underlying neighborhood structure and a fixed merge threshold of $0.5$ (visually picked from the ToMATo persistence diagram corresponding to the trained baseline density; the code to reproduce the results produces such plots). Density weights passed to ToMATo are obtained by exponentiating and normalizing the log-density values.

\textbf{Conformal Credible Set.} For uncertainty quantification, we apply the Conformalized Bayesian Inference methodology of \cite{bariletto2025conformalized} using the \texttt{cbi\_partitions} library to the inferred partitions of the OOS subset. The $T = 500$ posterior clustering samples (restricted to the $1{,}000$ OOS digits) are split evenly into training ($250$) and calibration ($250$) sets. A partition KDE is fitted with the Variation-of-Information loss metric \citep{meila2007comparing} and bandwidth parameter $0.5$. The resulting conformal $p$-value for the true OOS digit labeling is $0.2151$, confirming that the true labeling belongs to a credible set with guaranteed $90\%$ marginal coverage under the MPD.

\subsection*{Bone marrow scRNA-seq data}

\textbf{Dataset.} We applied our method to the Tabula Sapiens Bone Marrow dataset \citep{tabulasapiens2022,quake2025tabula}, a single-cell RNA sequencing (scRNA-seq) atlas comprising $n = 27{,}112$ individual cells annotated into 26 distinct cell types. The dataset, accessed via the CZ CELLxGENE portal,\footnote{At the following URL: \url{https://datasets.cellxgene.cziscience.com/c7f0c3ea-2083-4d87-a8e0-7f69626aa40d.h5ad}.} includes pre-computed Principal Component Analysis (PCA) and Uniform Manifold Approximation and Projection (UMAP) coordinates.

\textbf{Dimensionality Reduction.} As is standard practice in scRNA-seq analyses, we utilized the first $p = 10$ principal components as the continuous feature representations (latent codes) for density estimation and clustering. Prior to fitting the density model, these 10-dimensional PCA features were standardized to zero mean and unit variance. The 2-dimensional UMAP coordinates were reserved exclusively for visualization purposes.

\textbf{Density Estimator.} We modeled the underlying continuous distribution of the 10-dimensional PCA features using a MAF \citep{papamakarios2017masked}. The normalizing flow architecture consisted of 16 autoregressive layers, each parameterized by a MADE \citep{germain2015made} containing two hidden layers of 128 units each. Permutation layers were interleaved between the autoregressive layers to reverse the variable ordering. The base distribution was a standard 10-dimensional Gaussian. The MAF was trained via maximum likelihood on the entire dataset using the AdamW optimizer \citep{loshchilov2019decoupled} with a batch size of 500 and a weight decay of $10^{-4}$. Training proceeded for 100 epochs, employing a cosine decay learning rate schedule starting from $10^{-4}$ and decaying to $10^{-5}$ over the total number of training steps.

\textbf{Predictive Resampling.} Following training, we generated $T = 500$ independent parameter configurations by score-based predictive resampling. Each of the $T$ chains was run for $N = 3{,}000$ score-based update steps. We utilized a base learning rate of $\eta_0 = 5 \times 10^{-3}$, following the schedule $\eta_k = \eta_0 / (n + k)$ where $n = 27{,}112$. The $500$ chains were parallelized on a single GPU.

\textbf{Clustering.} Clustering was performed on the standardized PCA features using the ToMATo algorithm \citep{chazal2013persistence}. The underlying topological structure was defined by a $k$-nearest-neighbor graph ($k = 30$, brute-force). For the initial baseline clustering and each of the $T = 500$ resampled densities, the corresponding density estimates (exponentiated and normalized by subtracting the maximum log-probability for numerical stability) were used as vertex weights. A fixed merge threshold of $0.3$ was applied across all ToMATo runs, determined by visual inspection of the persistence diagram associated with the baseline density estimate. This generated a posterior ensemble of $T = 500$ distinct cell partitions.

\textbf{Co-clustering Matrix Computation.} To summarize the clustering uncertainty at the cell-type level, we aggregated the $T = 500$ individual cell partitions into a cell-type co-clustering probability matrix. Let $R = 500$ be the total number of resampling runs, $C$ be the number of unique cell types (26), and $N_u$ be the total number of cells of type $u$ in the dataset. For a single cluster $k$ in a given run $r$, the number of co-clustered pairs between cells of type $u$ and type $v$ is simply $n_{u,k}^{(r)} \times n_{v,k}^{(r)}$, where $n_{u,k}^{(r)}$ is the count of type $u$ cells assigned to cluster $k$. Summing these pairs across all clusters and all runs yields the total observed co-clustering events between the two types. To convert this into a probability, we normalized by the total theoretically possible number of cross-type pairings over the experiment, which is $R \times N_u \times N_v$. Then, the entries of the cell-type co-clustering probability matrix $M$ plotted in the main text are computed as:
$$M_{u,v} = \frac{1}{R \times N_u\times N_v} \sum_{r=1}^{R} \sum_{k} n_{u,k}^{(r)} n_{v,k}^{(r)}.$$

For the neutrophil-restricted co-clustering analysis, a standard cell-by-cell pairwise similarity matrix was computed over the 8,677 neutrophil cells, averaging the binary adjacency matrices (1 if in the same cluster, 0 otherwise) across the $T = 500$ runs. In both cases, rows and columns were ordered using hierarchical agglomerative clustering with average linkage based on the distance metric $1 - M$ to group highly co-clustering blocks along the diagonal.

\bibliography{arxiv.bib}

\end{document}